\definecolor{myBlue}{HTML}{0F1A5F}
\definecolor{myRed}{HTML}{721010}
\pgfplotsset{compat=1.14}	 %
\pgfplotsset{compat/show suggested version=false}
\pgfplotsset{every mark/.append style={solid}}
\titlespacing*{\paragraph}{0pt}{0.35\baselineskip}{1em}
\theoremstyle{plain}
\newtheorem{theorem}{Theorem}[section]
\theoremstyle{definition}
\theoremstyle{remark}
\definecolor{C0}{rgb}{0.121569, 0.466667, 0.705882}
\definecolor{C1}{rgb}{1.000000, 0.498039, 0.054902}
\definecolor{C2}{rgb}{0.172549, 0.627451, 0.172549}
\definecolor{C3}{rgb}{0.839216, 0.152941, 0.156863}
\definecolor{C4}{rgb}{0.580392, 0.403922, 0.741176}
\definecolor{C5}{rgb}{0.549020, 0.337255, 0.294118}
\definecolor{C6}{rgb}{0.890196, 0.466667, 0.760784}
\definecolor{C7}{rgb}{0.498039, 0.498039, 0.498039}
\definecolor{C8}{rgb}{0.737255, 0.741176, 0.133333}
\definecolor{C9}{rgb}{0.090196, 0.745098, 0.811765}
\newcommand{\mycc}{\cellcolor{LightGray}}
\newcommand{\myparallel}{{\mkern3mu\vphantom{\perp}\vrule depth 0pt\mkern2mu\vrule depth 0pt\mkern3mu}}
\newcolumntype{Y}{>{\centering\arraybackslash}X}
\newcolumntype{C}{>{\hsize=.0\hsize\centering\arraybackslash}X}
\colorlet{LightGoldenrod}{White!40!Goldenrod}
\colorlet{LightGray}{White!90!Periwinkle}
\definecolor{LG}{gray}{0.95}
\definecolor{codegreen}{rgb}{0,0.6,0}
  \definecolor{codegray}{rgb}{0.5,0.5,0.5}
  \definecolor{codepurple}{rgb}{0.58,0,0.82}
  \definecolor{backcolour}{rgb}{0.95,0.95,0.92}
  \lstdefinestyle{mystyle}{
    backgroundcolor=\color{backcolour},
    commentstyle=\color{codegreen},
    keywordstyle=\color{magenta},
    numberstyle=\tiny\color{codegray},
    stringstyle=\color{codepurple},
    basicstyle=\ttfamily\footnotesize,
    breakatwhitespace=false,
    breaklines=true,
    captionpos=b,
    keepspaces=true,
    numbers=left,
    numbersep=5pt,
    showspaces=false,
    showstringspaces=false,
    showtabs=false,
    tabsize=2
  }
\newcommand{\wcfg}{w_{\textnormal{CFG}}}
\newcommand{\whigs}{w_{\textnormal{\acrshort{method}}}}
\newcommand{\fcfg}{f_{\mathrm{CFG}}}
\newcommand{\pred}[1][t]{D_{\mtheta}(\vz_{#1}, #1, \vy)}
\newcommand{\prednull}[1][t]{D_{\mtheta}(\vz_{#1}, #1)}
\newcommand{\predcfg}[1][t]{{D}_{\textrm{CFG}}(\vz_{#1}, #1, \vy)}
\newcommand{\predcond}[1][t]{D_c(\vz_{#1})}
\newcommand{\preduncond}[1][t]{D_u(\vz_{#1})}
\newcommand{\predguided}[1][t]{{D}_{\textnormal{CFG}}(\vz_{#1})}
\newcommand{\predguidedours}[1][t]{{D}_{\textnormal{\gls{method}}}(\vz_{#1})}
\newcommand{\dpred}[1][t]{\Delta D_{#1}}
\newcommand{\dpredpar}[1][t]{\Delta D_{#1}^{\myparallel}}
\newcommand{\dpredorth}[1][t]{\Delta D_{#1}^{\perp}}
\newcommand{\predi}{D_{\mtheta}(\vz_{t_i}, t_i, \vy)}
\newcommand{\buffer}{\mathcal{H}_k}
\newcommand{\sdthree}{Stable Diffusion 3}
\newcommand{\tmin}{t_{\textnormal{min}}}
\newcommand{\tmax}{t_{\textnormal{max}}}
\newcommand{\mc}[1]{\mathcal{#1}}
\def\eqref#1{equation~\ref{#1}}
\def\1{\bm{1}}
\newcommand{\dd}{\mathrm{d}}
\def\vzero{{\bm{0}}}
\def\vx{{\bm{x}}}
\def\vy{{\bm{y}}}
\def\vz{{\bm{z}}}
\def\mI{{\pmb{I}}}
\DeclareMathAlphabet{\mathsfit}{\encodingdefault}{\sfdefault}{m}{sl}
\SetMathAlphabet{\mathsfit}{bold}{\encodingdefault}{\sfdefault}{bx}{n}
\def\mepsilon{{\bm{\epsilon}}}
\def\mtheta{{\bm{\theta}}}
\newcommand{\pdata}{p_{\textnormal{data}}}
\newcommand{\norm}[1]{\left\lVert #1 \right\rVert}
\newcommand{\inner}[2]{\left \langle #1 , #2 \right \rangle}
\newcommand{\normal}[2]{\mc{N}\prn{#1, #2}}
\newcommand{\sg}[1]{\operatorname{sg}\brk{#1}}
\DeclarePairedDelimiterX{\infdivx}[2]{(}{)}{%
  #1\delimsize\|#2%
}
\newcommand{\brk}[1]{\left[ #1 \right]}
\newcommand{\prn}[1]{\left( #1 \right)}
\newcommand{\zero}{\pmb{0}}
\DeclareDocumentCommand{\ex}{m o}{
   \mathbb{E}\IfValueT{#2}{_{#2}}\left[#1\right]
}
\DeclareMathOperator{\grad}{\nabla}
\DeclarePairedDelimiterX\Set[1]{\lbrace}{\rbrace}%
 {  #1 }
\def\ddefloop#1{\ifx\ddefloop#1\else\ddef{#1}\expandafter\ddefloop\fi}
\def\ddef#1{\expandafter\def\csname #1bb\endcsname{\ensuremath{\mathbb{#1}}}}
\def\ddefloop#1{\ifx\ddefloop#1\else\ddef{#1}\expandafter\ddefloop\fi}
\def\ddef#1{\expandafter\def\csname #1b\endcsname{\ensuremath{\mathbf{#1}}}}
\def\ddef#1{\expandafter\def\csname #1c\endcsname{\ensuremath{\mathcal{#1}}}}
\def\ddef#1{\expandafter\def\csname #1hat\endcsname{\ensuremath{\widehat{#1}}}}
\def\ddef#1{\expandafter\def\csname hc#1\endcsname{\ensuremath{\widehat{\mathcal{#1}}}}}
\def\ddef#1{\expandafter\def\csname #1til\endcsname{\ensuremath{\widetilde{#1}}}}
\def\ddef#1{\expandafter\def\csname tc#1\endcsname{\ensuremath{\widetilde{\mathcal{#1}}}}}
\def\ddef#1{\expandafter\def\csname #1Bar\endcsname{\ensuremath{\bar{#1}}}}
\newacronym{ldm}{LDM}{latent diffusion model}
\newacronym{vae}{VAE}{variational autoencoder}
\newacronym{sdvae}{SD-VAE}{Stable Diffusion VAE}
\newacronym{method}{HiGS}{history-guided sampling}
\newacronym{sgd}{SGD}{stochastic gradient descent}
\newacronym{nfe}{NFEs}{neural function evaluations}
\newacronym{cfg}{CFG}{classifier-free guidance}
\newacronym{dct}{DCT}{discrete cosine transform}
\newcommand{\tabResultSD}{
    \begin{table}[t!]
        \centering
        \caption{Evaluation of using \gls{method} with various Stable Diffusion models. \gls{method} improves all metrics reflecting human preference for image quality and prompt alignment across multiple benchmarks. All comparisons are conducted with the same NFE and CFG scale for fairness.}
        \label{tab:sd-results}
        
        \maxsizebox{\textwidth}{!}{
        \begin{tabular}{llcccc}
        \toprule
        Benchmark & Model & Guidance & ImageReward $\uparrow$ & HPSv2 $\uparrow$ & Win Rate $\uparrow$ \\
        \midrule
        \multirow{6}{*}{DrawBench \citep{saharia2022photorealistic}} 
            & \multirow{2}{*}{Stable Diffusion XL} & CFG & -0.091 & 0.224 & 0.07 \\
            & & \mycc +\gls{method} (Ours) & \mycc \textbf{0.148} & \mycc \textbf{0.249} & \mycc \textbf{0.93} \\
        \cmidrule{2-6}
            & \multirow{2}{*}{Stable Diffusion 3} & CFG & 0.491 & 0.257 & 0.18 \\
            & & \mycc +\gls{method} (Ours) & \mycc \textbf{0.621} & \mycc \textbf{0.272} & \mycc \textbf{0.82} \\
        \cmidrule{2-6}
            & \multirow{2}{*}{Stable Diffusion 3.5} & CFG & 0.621 & 0.258 & 0.21 \\
            & & \mycc +\gls{method} (Ours) & \mycc \textbf{0.702} & \mycc \textbf{0.270} & \mycc \textbf{0.79} \\
        \midrule
        \multirow{6}{*}{Parti Prompts \citep{yu2022scaling}} 
            & \multirow{2}{*}{Stable Diffusion XL} & CFG & 0.191 & 0.239 & 0.08 \\
            & & \mycc +\gls{method} (Ours) & \mycc \textbf{0.360} & \mycc \textbf{0.261} & \mycc \textbf{0.92} \\
        \cmidrule{2-6}
            & \multirow{2}{*}{Stable Diffusion 3} & CFG & 0.843 & 0.273 & 0.19 \\
            & & \mycc +\gls{method} (Ours) & \mycc \textbf{0.919} & \mycc \textbf{0.285} & \mycc \textbf{0.81} \\
        \cmidrule{2-6}
            & \multirow{2}{*}{Stable Diffusion 3.5} & CFG & 0.879 & 0.270 & 0.18 \\
            & & \mycc +\gls{method} (Ours) & \mycc \textbf{0.935} & \mycc \textbf{0.282} & \mycc \textbf{0.82} \\
        \midrule
        \multirow{6}{*}{HPS Prompts \citep{wu2023human}} 
            & \multirow{2}{*}{Stable Diffusion XL} & CFG & 0.327 & 0.245 & 0.04 \\
            & & \mycc +\gls{method} (Ours) & \mycc \textbf{0.515} & \mycc \textbf{0.275} & \mycc \textbf{0.96} \\
        \cmidrule{2-6}
            & \multirow{2}{*}{Stable Diffusion 3} & CFG & 0.820 & 0.279 & 0.21 \\
            & & \mycc +\gls{method} (Ours) & \mycc \textbf{0.901} & \mycc \textbf{0.291} & \mycc \textbf{0.79} \\
        \cmidrule{2-6}
            & \multirow{2}{*}{Stable Diffusion 3.5} & CFG & 0.821 & 0.274 & 0.18 \\
            & & \mycc +\gls{method} (Ours) & \mycc \textbf{0.889} & \mycc \textbf{0.289} & \mycc \textbf{0.82} \\
        \bottomrule
    \end{tabular}
        }
    \end{table}
}
\newcommand{\tabResultDistilled}{
    \begin{table}[t!]
        \centering
        \caption{Effect of adding \gls{method} to the sampling process of distilled models. \gls{method} improves the quality of these models, showing that its effects are complementary to diffusion distillation.}
        \label{tab:distillation}
        
        \maxsizebox{\textwidth}{!}{
        \begin{tabular}{llcccccc}
        \toprule
        Model & Guidance & ImageReward $\uparrow$ & HPSv2 $\uparrow$ & Win Rate $\uparrow$ & CLIP Score $\uparrow$  \\
        \midrule
        \multirow{2}{*}{SDXL-Flash \citep{sdxl-flash}}
             & CFG & 0.774 & 0.273 & 0.03 & 0.332 \\
            & \mycc +\gls{method} (Ours) & \mycc \textbf{0.864} & \mycc \textbf{0.298} & \mycc \textbf{0.97} & \mycc \textbf{0.333} \\
        \midrule
        \multirow{2}{*}{SDXL-Lightning \citep{lin2024sdxllightning}}
             & CFG & 0.63 & 0.277 & 0.18 & \textbf{0.318} \\
            & \mycc +\gls{method} (Ours) & \mycc \textbf{0.66} & \mycc \textbf{0.285} & \mycc \textbf{0.82} & \mycc {0.317} \\
        \bottomrule
    \end{tabular}
        }
    \end{table}
}
\newcommand{\tabResultREPA}{
    \begin{table}[t!]
        \centering
        \caption{Effect of adding \gls{method} to recent state-of-the-art methods for conditional ImageNet generation at 256$\times$256. \gls{method} significantly improves sampling speed, matching the FID of the original models in just 30–40 steps. Moreover, \gls{method} achieves a new state-of-the-art FID of 1.61 for conditional generation without CFG using the SiT-XL + REPA-E model \citep{leng2025repae}.}
        \label{tab:repa}
        
        \maxsizebox{\textwidth}{!}{
    \begin{tabular}{llccccc}
        \toprule
        Model & Guidance & \# Steps $\downarrow$ & FID $\downarrow$ & IS $\uparrow$ & Precision $\uparrow$ & Recall $\uparrow$ \\
        \midrule
        \multirow{4}{*}{REPA-E \citep{leng2025repae}}
        & Unguided & 250 & 1.83 & 217.30 & 0.77 & \textbf{0.66}\\
        & \mycc +\gls{method} (Ours) & \mycc \textbf{30} & \mycc \textbf{1.61} & \mycc \textbf{240.75} & \mycc \textbf{0.81}  & \mycc {0.62} \\
        \cmidrule{2-7}
             & CFG & 250 & 1.26 & \textbf{314.90} & 0.79 & \textbf{0.66} \\
            & \mycc +\gls{method} (Ours) & \mycc \textbf{40} & \mycc {1.32} & \mycc {306.10} & \mycc \textbf{0.80}  & \mycc {0.65} \\
        \midrule
        \multirow{4}{*}{REPA \citep{yu2024repa}}
        & Unguided & 250 & 5.90 & 157.80 & 0.70 & \textbf{0.69}\\
        & \mycc +\gls{method} (Ours) & \mycc \textbf{40} & \mycc \textbf{5.43} & \mycc \textbf{165.91} & \mycc \textbf{0.71}  & \mycc {0.68} \\
        \cmidrule{2-7}
             & CFG & 250 & \textbf{1.42} & 305.70 & 0.80 & \textbf{0.65} \\
            & \mycc +\gls{method} (Ours) & \mycc \textbf{40} & \mycc {1.44} & \mycc \textbf{306.80} & \mycc \textbf{0.80}  & \mycc {0.64} \\
        \bottomrule
    \end{tabular}
        }
    \end{table}
}
\newcommand{\tabResultSDClip}{
    \begin{table}[t!]
        \centering
        \caption{Comparison of CLIP scores across datasets and models. \gls{method} consistently improves generation quality (see \Cref{tab:sd-results}) while maintaining virtually identical CLIP scores. This demonstrates that the quality gains are achieved without sacrificing prompt alignment.}
        \label{tab:sd-results-clip}
        
        \maxsizebox{\textwidth}{!}{
        \begin{tabular}{lllc}
                \toprule
                Benchmark & Model & Guidance & CLIP Score $\uparrow$ \\
                \midrule

                \multirow{6}{*}{DrawBench \citep{saharia2022photorealistic}}
                    & \multirow{2}{*}{Stable Diffusion XL} & CFG & 0.308 \\
                    & & \mycc +\gls{method} (Ours) & \mycc \textbf{0.309} \\
                    \cmidrule{2-4}
                    & \multirow{2}{*}{Stable Diffusion 3} & CFG & 0.329 \\
                    & & \mycc +\gls{method} (Ours) & \mycc 0.327 \\
                    \cmidrule{2-4}
                    & \multirow{2}{*}{Stable Diffusion 3.5} & CFG & \textbf{0.334} \\
                    & & \mycc +\gls{method} (Ours) & \mycc 0.330 \\
                \midrule

                \multirow{6}{*}{Parti Prompts \citep{yu2022scaling}}
                    & \multirow{2}{*}{Stable Diffusion XL} & CFG & 0.317 \\
                    & & \mycc +\gls{method} (Ours) & \mycc \textbf{0.318} \\
                    \cmidrule{2-4}
                    & \multirow{2}{*}{Stable Diffusion 3} & CFG & \textbf{0.327} \\
                    & & \mycc +\gls{method} (Ours) & \mycc 0.325 \\
                    \cmidrule{2-4}
                    & \multirow{2}{*}{Stable Diffusion 3.5} & CFG & \textbf{0.330} \\
                    & & \mycc +\gls{method} (Ours) & \mycc 0.329 \\
                \midrule

                \multirow{6}{*}{HPS Prompts \citep{wu2023human}}
                    & \multirow{2}{*}{Stable Diffusion XL} & CFG & 0.333 \\
                    & & \mycc +\gls{method} (Ours) & \mycc \textbf{0.336} \\
                    \cmidrule{2-4}
                    & \multirow{2}{*}{Stable Diffusion 3} & CFG & \textbf{0.332} \\
                    & & \mycc +\gls{method} (Ours) & \mycc 0.330 \\
                    \cmidrule{2-4}
                    & \multirow{2}{*}{Stable Diffusion 3.5} & CFG & \textbf{0.337} \\
                    & & \mycc +\gls{method} (Ours) & \mycc 0.336 \\
                \bottomrule
            \end{tabular}
        }
    \end{table}
}
\newcommand{\tabResultFID}{
    \begin{table}[t]
        \centering
        \caption{Quantitative evaluation of the effect of \gls{method} on sampling. \gls{method} consistently improves all metrics across different models, demonstrating higher generation quality than the CFG baseline. For fairness, sampling with and without \gls{method} is performed using the same NFE and CFG scale.}
        \label{tab:fid-results}
        \maxsizebox{\textwidth}{!}{
            \small
            \begin{tabular}{llcccc}
                \toprule
                Model & Guidance & FID $\downarrow$ & IS $\uparrow$ & Precision $\uparrow$ & Recall $\uparrow$ \\
                \midrule

                \multirow{2}{*}{SiT-XL + REPA \citep{yu2024repa}} 
                & CFG & 12.08 & 187.11 & 0.68 & \textbf{0.73} \\
                & \mycc \gls{method} (Ours) & \mycc \textbf{4.86} & \mycc \textbf{277.20} & \mycc \textbf{0.80} & \mycc {0.70} \\
                \midrule

                \multirow{2}{*}{DiT-XL/2 \citep{peeblesScalableDiffusionModels2022}} 
                & CFG & 8.73 & 173.21 & 0.72 & 0.68 \\
                & \mycc \gls{method} (Ours) & \mycc \textbf{7.15} & \mycc \textbf{180.05} & \mycc \textbf{0.75} & \mycc \textbf{0.71} \\
                \midrule

                \multirow{2}{*}{Stable Diffusion XL \citep{sdxl}} 
                & CFG & 28.49 & 35.07 & 0.56 & 0.54 \\
                & \mycc +\gls{method} (Ours) & \mycc \textbf{26.18} & \mycc \textbf{36.22} & \mycc \textbf{0.59} & \mycc \textbf{0.57} \\
                \midrule

                \multirow{2}{*}{Stable Diffusion 3 \citep{esser2024scaling}} 
                & CFG & 27.19 & 40.11 & 0.73 & 0.41 \\
                & \mycc +\gls{method} (Ours) & \mycc \textbf{26.84} & \mycc \textbf{40.94} & \mycc \textbf{0.76} & \mycc \textbf{0.42} \\
                
                \bottomrule
            \end{tabular}
        }
    \end{table}
}
\newcommand{\tabSamplers}{
    \begin{table}[t]
        \centering
        \caption{Effect of adding \gls{method} to various popular diffusion samplers using the DiT-XL/2 model with 15 steps and $\wcfg=1.25$. Note that \gls{method} improves the performance of all samplers (including multistep solvers such as DPM++), and hence its effect is complementary to changing the sampler.}
        \label{tab:samplers}
        \maxsizebox{\textwidth}{!}{
        \begin{tabular}{llcccc}
        \toprule
         Sampler & Guidance & FID $\downarrow$ & IS $\uparrow$ & Precision $\uparrow$ & Recall $\uparrow$ \\
        \midrule
             \multirow{2}{*}{DDIM \citep{songDenoisingDiffusionImplicit2022}} 
                 & CFG & 11.87 & 151.40 & 0.69 & \textbf{0.70} \\
                 & \mycc +\gls{method} (Ours) & \mycc \textbf{8.73} & \mycc \textbf{173.21} & \mycc \textbf{0.73} & \mycc 0.69 \\
        \midrule
             \multirow{2}{*}{DPM++ \citep{dpm_solver}} 
                 & CFG & 7.15 & 180.05 & 0.75 & \textbf{0.71} \\
                 & \mycc +\gls{method} (Ours) & \mycc \textbf{6.66} & \mycc \textbf{191.45} & \mycc \textbf{0.76} & \mycc 0.70 \\
        \midrule
             \multirow{2}{*}{DDPM \citep{hoDenoisingDiffusionProbabilistic2020}} 
                 & CFG & 24.65 & 107.25 & 0.58 & \textbf{0.67} \\
                 & \mycc +\gls{method} (Ours) & \mycc \textbf{17.03} & \mycc \textbf{128.38} & \mycc \textbf{0.64} & \mycc 0.64 \\
        \midrule
             \multirow{2}{*}{PLMS \citep{plms}} 
                 & CFG & 6.75 & 183.11 & 0.74 & \textbf{0.72} \\
                 & \mycc +\gls{method} (Ours) & \mycc \textbf{6.13} & \mycc \textbf{191.44} & \mycc \textbf{0.75} & \mycc 0.72 \\
        \midrule
             \multirow{2}{*}{UniPC \citep{zhao2023unipc}} 
                 & CFG & 6.79 & 185.86 & 0.75 & \textbf{0.71} \\
                 & \mycc +\gls{method} (Ours) & \mycc \textbf{6.60} & \mycc \textbf{192.81} & \mycc \textbf{0.76} & \mycc 0.69 \\
        \bottomrule
    \end{tabular}
        }
    \end{table}
}
\newcommand{\tabAblationInput}{
\begin{table}[t]
\centering
\caption{Comparison of using the conditional model prediction vs the CFG prediction as input to \gls{method}. While both options outperform baseline sampling without \gls{method}, using the CFG-guided prediction (when available) leads to better results.}
\label{tab:ablation-input}
\begin{tabular}{lccc}
\toprule
Config & HPSv2 $\uparrow$ & Image Reward $\uparrow$ & CLIP Score $\uparrow$ \\
\midrule
Baseline (with CFG)                & 0.238 & 0.174 & 0.317 \\
\midrule
\mycc +\gls{method} (Conditional)  & \mycc 0.249 & \mycc 0.234 & \mycc 0.315 \\
\mycc +\gls{method} (CFG)          & \mycc \textbf{0.255} & \mycc \textbf{0.371} & \mycc \textbf{0.322} \\
\bottomrule
\end{tabular}
\end{table}
}
\newcommand{\tabAblationBeta}{
\begin{table}[t!]
\centering
\caption{Effect of different weight schedulers on \gls{method}.}
\label{tab:ablation-beta}
\begin{tabular}{lcccc}
\toprule
Weight Scheduler & $\whigs$ & HPSv2 $\uparrow$ & Image Reward $\uparrow$ & CLIP Score $\uparrow$ \\
\midrule
Constant              & 1.75 & \textbf{0.261} & 0.36 & \textbf{0.319} \\
Square-root     & 2.50 & \textbf{0.261} & \textbf{0.39} & \textbf{0.319} \\
Linear         & 3.25 & 0.260 & 0.37 & 0.318 \\
\bottomrule
\end{tabular}
\end{table}
}
\newcommand{\tabAblationG}{
\begin{table}[t!]
\centering
\caption{Effect of different averaging functions $g(\buffer)$ on \gls{method}.}
\label{tab:ablation-g}
\begin{tabular}{lcccc}
\toprule
$g(\buffer)$ & HPSv2 $\uparrow$ & Image Reward $\uparrow$ & CLIP Score $\uparrow$ \\
\midrule
Random            & \textbf{0.261} & 0.362 & 0.318 \\
Average           & \textbf{0.261} & 0.349 & 0.319 \\
Weighted average  & 0.260 & 0.370 & 0.320 \\
EMA average       & 0.255 & \textbf{0.371} & \textbf{0.322} \\
\bottomrule
\end{tabular}
\end{table}
}
\newcommand{\tabParameters}{
    \begin{table}[t!]
        \centering
        \caption{Guidance parameters used for \Cref{tab:fid-results}.}
        \label{tab:parameters1}
        \maxsizebox{\linewidth}{!}{
    \begin{booktabs}{lccccc}
        \toprule
        Model &  \# Steps & $\wcfg$ & $\whigs$ &  $\eta$ & $\tmin$ & $\tmax$ & $\alpha$ & $R_c$  \\
        \midrule
        SiT-XL + REPA & 30 & 1.5 & 1 & 1 & 0.3 & 1 & 0.75 & 0.05  \\ 
        DiT-XL/2 & 15 & 1.25 & 2 & 0 & 0.3 & 1 & 0.75 & 0.05\\
        Stable Diffusion XL & 20 & 2.5 & 1.75 & 0 & 0.4 & 1 & 0.75 & 0.05 \\
        Stable Diffusion 3 & 20 & 2.5 & 1.75 & 0 & 0.4 & 1 & 0.75 & 0.05 \\
        \bottomrule
        \end{booktabs}
        }
    \end{table}

    \begin{table}[t!]
        \centering
        \caption{Guidance parameters used for Table 2.}
        \label{tab:parameters2}
        \maxsizebox{\linewidth}{!}{
    \begin{booktabs}{lccccc}
        \toprule
        Model &  \# Steps & $\wcfg$ & $\whigs$ &  $\eta$ & $\tmin$ & $\tmax$ & $\alpha$ & $R_c$  \\
        \midrule
        Stable Diffusion XL & 20 & 2.5 & 1.75 & 1 & 0.4 & 1 & 0.5 & 0.05 \\
        Stable Diffusion 3 & 20 & 2.5 & 1.75 & 1 & 0.4 & 1 & 0.5 & 0.05 \\
        Stable Diffusion 3.5 & 20 & 2.5 & 1.75 & 1 & 0.4 & 1 & 0.5 & 0.05 \\
        \bottomrule
        \end{booktabs}
        }
    \end{table}

    \begin{table}[t!]
        \centering
        \caption{Guidance parameters used for \Cref{tab:repa}.}
        \label{tab:parameters3}
        \maxsizebox{\linewidth}{!}{
    \begin{booktabs}{lccccc}
        \toprule
        Model &  \# Steps & $\wcfg$ & $\whigs$ &  $\eta$ & $\tmin$ & $\tmax$ & $\alpha$ & $R_c$  \\
        \midrule
        SiT-XL + REPA (Unguided) & 40 & 1 & 1 & 1 & 0.4 & 1 & 0.75 & 0.05  \\ 
        SiT-XL + REPA (with CFG) & 40 & 1.8 & 1 & 1 & 0.35 & 1 & 0.75 & 0.05  \\ 
        SiT-XL + REPA-E (Unguided) & 30 & 1 & 1.25 & 1 & 0.35 & 1 & 0.75 & 0.05  \\ 
        SiT-XL + REPA-E (with CFG) & 40 & 2.5 & 0.75 & 0 & 0.3 & 1 & 0.75 & 0.05  \\ 
        \bottomrule
        \end{booktabs}
        }
    \end{table}
}
\newcommand{\algHigs}{
            \begin{algorithm}[t!]
            \centering
            \caption{Sampling with \gls{method}}
            \label{alg:method}
            \setstretch{1.15}
            \begin{algorithmic}[1]
            \Require Diffusion model $D_\mtheta$, input condition $\vy$
            \Require CFG scale $\wcfg$, \gls{method} schedule $\whigs(t)$, history length $W$
            \Require EMA parameter $\alpha$, projection weight $\eta$, DCT parameters $(R_c, \lambda)$
            \State Initialize latent $\vz_T \sim \mathcal{N}(\vzero, \mI)$, history buffer $\mathcal{H} \gets \emptyset$
            \For{$t_k \in \Set{t_0, t_1, \dotsc, t_T}$}
                \State Compute conditional and unconditional predictions: $\predcond[t_k], \preduncond[t_k]$
                \State Apply CFG: $\predguided[t_k] = \wcfg \predcond[t_k] -  (\wcfg - 1) \preduncond[t_k]$
                \State Compute the history signal $g(\buffer) = \sum_{i\in I_k} \alpha(1-\alpha)^{t-1-i} \predguided[t_i]$
                \State Form the guidance direction: $\dpred[t_k] = \predguided[t_k] - g(\buffer)$
                \If{projection enabled (i.e., $\eta < 1$)}
                    \State Project $\dpred[t_k]$ into orthogonal and parallel components w.r.t. $\predguided[t_k]$
                    \State $\dpred[t_k] \gets \dpredorth[t_k] + \eta \dpredpar[t_k]$
                \EndIf
                \State Apply DCT-based high-pass filter: $\dpred[t_k] \gets \texttt{iDCT}(H(R) \cdot \texttt{DCT}(\dpred[t_k]))$
                \State Apply \gls{method}: $\predguidedours[t_k] = \predguided[t_k] + \whigs(t_k) \dpred[t_k]$
                \State Update history: $\mathcal{H}_{k+1} \gets \mathcal{H}_k \cup \{\predguided[t_k]\}$, truncate oldest if $|\mathcal{H}_{k+1}| > W$
                \State Apply one sampling step: $\vz_{t-1} = \textsc{SamplingStep}(\predguidedours, \vz_t, t)$
            \EndFor
            \State \Return $\vz_0$
            \end{algorithmic}
            \end{algorithm}
}
\newcommand{\CFGScalePlot}{
    \begin{figure}[t!]
        \centering
        \begin{subfigure}{0.495\textwidth}
            \begin{minipage}{\textwidth}
            \resizebox{\textwidth}{!}{
                \begin{tikzpicture}
                    \begin{groupplot}[
                            group style={
                                    group size=4 by 1,
                                    horizontal sep=1.25cm,
                                },
                            xlabel={$\wcfg$},
                            ymajorgrids=true,
                            xmajorgrids=true,
                            grid style=dashed,
                            major grid style = {lightgray},
                            tick label style={font=\normalsize},
                            label style={font=\Large},
                            title style={font=\Large},
                            %   legend to name=zelda,
                            % legend pos={north east}, 
                            legend cell align={left},
                            legend style={at={(0.95,0.525)},anchor=east, font=\Large},
                            scale only axis,
                        ]
                        \nextgroupplot[title={HPSv2}]
                        \addplot [C0, mark=*, very thick, mark options={solid}] table [x index=0, y index=2, col sep=space] {data/cfg_base.dat};
                        \addplot [C1, mark=*, very thick, mark options={solid}] table [x index=0, y index=2, col sep=space] {data/cfg_higs.dat};
                        
                        \nextgroupplot[title={Win Rate}]
                        \addplot [C0, mark=*, very thick, mark options={solid}] table [x index=0, y index=4, col sep=space] {data/cfg_base.dat};
                        \addplot [C1, mark=*, very thick, mark options={solid}] table [x index=0, y index=4, col sep=space] {data/cfg_higs.dat};
                    \end{groupplot}
                \end{tikzpicture}
            }
        \end{minipage}
        \caption{Changing the guidance scale}
        \label{fig:cfg-scale}
        \end{subfigure}
        \begin{subfigure}{0.495\textwidth}
            \begin{minipage}{\textwidth}
            \resizebox{\textwidth}{!}{
                \begin{tikzpicture}
                    \begin{groupplot}[
                            group style={
                                    group size=4 by 1,
                                    horizontal sep=1.25cm,
                                },
                            xlabel={\# Steps},
                            ymajorgrids=true,
                            xmajorgrids=true,
                            grid style=dashed,
                            major grid style = {lightgray},
                            tick label style={font=\normalsize},
                            label style={font=\Large},
                            title style={font=\Large},
                            %   legend to name=zelda,
                            % legend pos={north east}, 
                            legend cell align={left},
                            legend style={at={(0.95,0.525)},anchor=east, font=\Large},
                            scale only axis,
                        ]
                        \nextgroupplot[title={HPSv2}]
                        \addplot [C0, mark=*, very thick, mark options={solid}] table [x index=0, y index=2, col sep=space] {data/nfe_base.dat};
                        \addplot [C1, mark=*, very thick, mark options={solid}] table [x index=0, y index=2, col sep=space] {data/nfe_higs.dat};
                        
                        \nextgroupplot[title={Win Rate}]
                        \addplot [C0, mark=*, very thick, mark options={solid}] table [x index=0, y index=4, col sep=space] {data/nfe_base.dat};
                        \addplot [C1, mark=*, very thick, mark options={solid}] table [x index=0, y index=4, col sep=space] {data/nfe_higs.dat};

                        \legend{CFG, \gls{method}}
                    \end{groupplot}                    
                \end{tikzpicture}
            }
        \end{minipage}
        \caption{Changing the number of sampling steps}
        \label{fig:nfe}
        \end{subfigure}
        \caption{Effect of \gls{method} across guidance scales and sampling budgets using Stable Diffusion 3 \citep{esser2024scaling}. \gls{method} consistently outperforms standard sampling with CFG in all settings, highlighting its effectiveness in improving generation quality under varying guidance scales and sampling budgets.}
    \end{figure}
}
\newcommand{\AblationWhigsPlot}{
    \begin{figure}[t]
        \centering
        \begin{minipage}{\textwidth}
            \resizebox{\textwidth}{!}{
                \begin{tikzpicture}
                    \begin{groupplot}[
                            group style={
                                    group size=4 by 1,
                                    horizontal sep=1.25cm,
                                },
                            xlabel={$\whigs$},
                            ymajorgrids=true,
                            xmajorgrids=true,
                            grid style=dashed,
                            major grid style = {lightgray},
                            tick label style={font=\normalsize},
                            label style={font=\Large},
                            title style={font=\Large},
                            %   legend to name=zelda,
                            % legend pos={north east}, 
                            legend cell align={left},
                            legend style={at={(0.95,0.525)},anchor=east, font=\Large},
                            scale only axis,
                        ]
                        \nextgroupplot[title={HPSv2}, ymin=0.23]
                        \addplot [C0, mark=*, very thick, mark options={solid}] table [x index=0, y index=1, col sep=space] {data/ablation_whigs.dat};
                        \draw[darkgray,dashed,very thick]
  (axis cs:\pgfkeysvalueof{/pgfplots/xmin},0.238)
  -- (axis cs:\pgfkeysvalueof{/pgfplots/xmax},0.238) node [pos=0.12, below, font=\Large] {CFG};
                        
                        \nextgroupplot[title={ImageReward}, ymin=0.13]
                        \addplot [C0, mark=*, very thick, mark options={solid}] table [x index=0, y index=2, col sep=space] {data/ablation_whigs.dat};
                        \draw[darkgray,dashed,very thick]
  (axis cs:\pgfkeysvalueof{/pgfplots/xmin},0.174)
  -- (axis cs:\pgfkeysvalueof{/pgfplots/xmax},0.174) node [pos=0.12, below, font=\Large] {CFG};

                        \nextgroupplot[title={CLIP Score}, ymin=0.315]
                        \addplot [C0, mark=*, very thick, mark options={solid}] table [x index=0, y index=3, col sep=space] {data/ablation_whigs.dat};

                        \draw[darkgray,dashed,very thick]
  (axis cs:\pgfkeysvalueof{/pgfplots/xmin},0.317)
  -- (axis cs:\pgfkeysvalueof{/pgfplots/xmax},0.317) node [pos=0.12, below, font=\Large] {CFG};

                    \end{groupplot}

                \end{tikzpicture}
            }
        \end{minipage}
            
        \caption{Effect of varying $\whigs$ on different quality metrics with Stable Diffusion XL \citep{sdxl}. \gls{method} consistently outperforms the CFG baseline across a wide range of $\whigs$ scales. In practice, we found that setting $\whigs \leq 3$ generally leads to good performance across models.}
        \label{fig:ablation-whigs}
    \end{figure}
}
\newcommand{\AblationEMAPlot}{
    \begin{figure}[t]
        \centering
        \begin{minipage}{\textwidth}
            \resizebox{\textwidth}{!}{
                \begin{tikzpicture}
                    \begin{groupplot}[
                            group style={
                                    group size=4 by 1,
                                    horizontal sep=1.25cm,
                                },
                            xlabel={EMA $\alpha$},
                            ymajorgrids=true,
                            xmajorgrids=true,
                            grid style=dashed,
                            major grid style = {lightgray},
                            tick label style={font=\normalsize},
                            label style={font=\Large},
                            title style={font=\Large},
                            %   legend to name=zelda,
                            % legend pos={north east}, 
                            legend cell align={left},
                            legend style={at={(0.95,0.525)},anchor=east, font=\Large},
                            scale only axis,
                        ]
                        \nextgroupplot[title={HPSv2}, ymin=0.15, ymax=0.3]
                        \addplot [C0, mark=*, very thick, mark options={solid}] table [x index=0, y index=1, col sep=space] {data/ablation_ema.dat};
                        
                        \nextgroupplot[title={ImageReward}, ymin=-1, ymax=1]
                        \addplot [C0, mark=*, very thick, mark options={solid}] table [x index=0, y index=2, col sep=space] {data/ablation_ema.dat};

                        \nextgroupplot[title={CLIP Score}, ymin=0.25, ymax=0.35]
                        \addplot [C0, mark=*, very thick, mark options={solid}] table [x index=0, y index=3, col sep=space] {data/ablation_ema.dat};
                    \end{groupplot}

                \end{tikzpicture}
            }
        \end{minipage}
            
        \caption{Effect of varying the EMA parameter $\alpha$ on different quality metrics for Stable Diffusion XL. The results indicate that performance remains consistent across choices of $\alpha$.}
        \label{fig:ablation-ema}
    \end{figure}
}
\newcommand{\AblationDCTPlot}{
    \begin{figure}[t]
        \centering
        \begin{minipage}{\textwidth}
            \resizebox{\textwidth}{!}{
                \begin{tikzpicture}
                    \begin{groupplot}[
                            group style={
                                    group size=3 by 1,
                                    horizontal sep=1.25cm,
                                },
                            xlabel={DCT threshold},
                            ymajorgrids=true,
                            xmajorgrids=true,
                            grid style=dashed,
                            major grid style = {lightgray},
                            tick label style={font=\normalsize},
                            label style={font=\Large},
                            title style={font=\Large},
                            legend cell align={left},
                            legend style={at={(0.95,0.525)},anchor=east, font=\Large},
                            scale only axis,
                        ]
                        \nextgroupplot[title={HPSv2}, ymin=0.15, ymax=0.3]
                        \addplot [C0, mark=*, very thick, mark options={solid}] table [x index=0, y index=1, col sep=space] {data/ablation_rdct.dat};
                        
                        \nextgroupplot[title={ImageReward}, ymin=-1, ymax=1]
                        \addplot [C0, mark=*, very thick, mark options={solid}] table [x index=0, y index=2, col sep=space] {data/ablation_rdct.dat};

                        \nextgroupplot[title={CLIP Score}, ymin=0.25, ymax=0.35]
                        \addplot [C0, mark=*, very thick, mark options={solid}] table [x index=0, y index=3, col sep=space] {data/ablation_rdct.dat};
                    \end{groupplot}
                \end{tikzpicture}
            }
        \end{minipage}
            
        \caption{Effect of varying the DCT threshold on different quality metrics for Stable Diffusion XL. The results indicate that performance remains consistent across different threshold choices.}
        \label{fig:ablation-dct}
    \end{figure}
}
\newcommand{\AblationMaxTPlot}{
    \begin{figure}[t]
        \centering
        \begin{minipage}{\textwidth}
            \resizebox{\textwidth}{!}{
                \begin{tikzpicture}
                    \begin{groupplot}[
                            group style={
                                    group size=3 by 1,
                                    horizontal sep=1.25cm,
                                },
                            xlabel={$\tmax$},
                            ymajorgrids=true,
                            xmajorgrids=true,
                            grid style=dashed,
                            major grid style = {lightgray},
                            tick label style={font=\normalsize},
                            label style={font=\Large},
                            title style={font=\Large},
                            legend cell align={left},
                            legend style={at={(0.95,0.525)},anchor=east, font=\Large},
                            scale only axis,
                        ]
                        \nextgroupplot[title={HPSv2}]
                        \addplot [C0, mark=*, very thick, mark options={solid}] table [x index=0, y index=1, col sep=space] {data/ablation_maxt.dat};
                        
                        \nextgroupplot[title={ImageReward}]
                        \addplot [C0, mark=*, very thick, mark options={solid}] table [x index=0, y index=2, col sep=space] {data/ablation_maxt.dat};

                        \nextgroupplot[title={CLIP Score}]
                        \addplot [C0, mark=*, very thick, mark options={solid}] table [x index=0, y index=3, col sep=space] {data/ablation_maxt.dat};
                    \end{groupplot}
                \end{tikzpicture}
            }
        \end{minipage}
            
        \caption{Effect of varying $\tmax$ on different quality metrics for Stable Diffusion XL. The results show that reducing $\tmax$ often leads to insufficient guidance and degraded quality. We recommend setting $\tmax \in [0.9, 1]$ for all models.}
        \label{fig:ablation-maxt}
    \end{figure}
}
\newcommand{\AblationMinTPlot}{
    \begin{figure}[t]
        \centering
        \begin{minipage}{\textwidth}
            \resizebox{\textwidth}{!}{
                \begin{tikzpicture}
                    \begin{groupplot}[
                            group style={
                                    group size=3 by 1,
                                    horizontal sep=1.25cm,
                                },
                            xlabel={$\tmin$},
                            ymajorgrids=true,
                            xmajorgrids=true,
                            grid style=dashed,
                            major grid style = {lightgray},
                            tick label style={font=\normalsize},
                            label style={font=\Large},
                            title style={font=\Large},
                            legend cell align={left},
                            legend style={at={(0.95,0.525)},anchor=east, font=\Large},
                            scale only axis,
                        ]
                        \nextgroupplot[title={HPSv2}]
                        \addplot [C0, mark=*, very thick, mark options={solid}] table [x index=0, y index=1, col sep=space] {data/ablation_mint.dat};
                        
                        \nextgroupplot[title={ImageReward}]
                        \addplot [C0, mark=*, very thick, mark options={solid}] table [x index=0, y index=2, col sep=space] {data/ablation_mint.dat};

                        \nextgroupplot[title={CLIP Score}]
                        \addplot [C0, mark=*, very thick, mark options={solid}] table [x index=0, y index=3, col sep=space] {data/ablation_mint.dat};
                    \end{groupplot}
                \end{tikzpicture}
            }
        \end{minipage}
            
        \caption{Effect of varying $\tmin$ on different quality metrics for Stable Diffusion XL. The results show that performance degrades when $\tmin$ is set too low or too high. We find that $\tmin \in [0.3, 0.5]$ yields good results across all models.}
        \label{fig:ablation-mint}
    \end{figure}
}
\title{HiGS: \textcolor{Plum}{Hi}story-\textcolor{Plum}{G}uided \textcolor{Plum}{S}ampling for Plug-and-Play Enhancement of Diffusion Models}
\author{Seyedmorteza Sadat\textsuperscript{1}, Farnood Salehi\textsuperscript{2}, Romann M.\ Weber\textsuperscript{2} \\
\textsuperscript{1}ETH Z\"urich, \textsuperscript{2}DisneyResearch\textbar{}Studios\\
\texttt{\{seyedmorteza.sadat\}@inf.ethz.ch} \\
\texttt{\{farnood.salehi, romann.weber\}@disneyresearch.com}
}
\begin{document}

\maketitle
\vspace{-0.15cm}

\begin{abstract}
While diffusion models have made remarkable progress in image generation, their outputs can still appear unrealistic and lack fine details, especially when using fewer number of \gls{nfe} or lower guidance scales. To address this issue, we propose a novel momentum-based sampling technique, termed \gls{method}, which enhances quality and efficiency of diffusion sampling by integrating recent model predictions into each inference step. Specifically, \gls{method} leverages the difference between the current prediction and a weighted average of past predictions to steer the sampling process toward more realistic outputs with better details and structure. Our approach introduces practically no additional computation and integrates seamlessly into existing diffusion frameworks, requiring neither extra training nor fine-tuning. Extensive experiments show that \gls{method} consistently improves image quality across diverse models and architectures and under varying sampling budgets and guidance scales. Moreover, using a pretrained SiT model, \gls{method} achieves a new state-of-the-art FID of 1.61 for unguided ImageNet generation at 256$\times$256 with only 30 sampling steps (instead of the standard 250). We thus present \gls{method} as a plug-and-play enhancement to standard diffusion sampling that enables faster generation with higher fidelity.
% by enabling high-quality generations with fewer \gls{nfe}.
\end{abstract}
\figTeaser

\section{Introduction}
Diffusion models \citep{sohl2015deep,hoDenoisingDiffusionProbabilistic2020,score-sde} are a class of generative models that learn the data distribution by reversing a forward noising process that gradually transforms data points into Gaussian noise. Although in theory, the reverse process should yield high-quality samples from the target distribution, diffusion models can produce outputs that are blurry or lack details due to optimization errors and inaccurate estimations of the data distribution at intermediate time steps. This issue becomes more pronounced when using fewer sampling steps or lower classifier-free guidance scales \citep{karras2022elucidating}.

The generation process in diffusion models typically involves multiple \acrfull{nfe}, which are computationally expensive, especially for large-scale diffusion models containing billions of parameters \citep{esser2024scaling}. Reducing the number of \gls{nfe} reduces the sampling cost but leads to outputs that lack clarity, detail, and coherent global structures, as seen in \Cref{fig:teaser}. Although various sampling methods have been proposed to reduce the required sampling steps \citep{dpm_solver, karras2022elucidating}, existing samplers still rely on relatively high step counts (e.g., 50) to achieve satisfactory results \citep{sdxl}. Finding training-free methods that enable generating high-quality samples with fewer \gls{nfe} remains an open research question.

In addition to iterative sampling, modern diffusion models use high guidance scales to enhance sample quality and prompt alignment \citep{sdxl, glide}. \Gls{cfg} \citep{hoCascadedDiffusionModels2021} has proven essential for reducing outliers and improving generation quality \citep{karras2024guiding}. However, \gls{cfg} doubles the network forward passes per sampling step, thereby increasing the computational cost of inference, and higher guidance scales often lead to oversaturation and reduced diversity \citep{sadat2025eliminating,sadat2024cads}. Consequently, further research is needed to enhance generation quality when using lower guidance scales combined with fewer \gls{nfe}.

In this paper, We investigate the sampling process in diffusion models and propose a training-free, momentum-based modification of inference that consistently improves global structure, detail, and sharpness across different sampling budgets and guidance scales. Inspired by momentum-based variance reduction in stochastic optimization \citep{cutkosky2019momentum}, we introduce \acrfull{method}, a novel method for improving the quality of diffusion models by leveraging the history of predictions made by the network. We demonstrate that this prediction history defines an effective guidance direction for steering the sampling trajectory toward higher-quality regions of the data distribution, especially under fewer \gls{nfe} or lower guidance scales.  \gls{method} enables higher quality outputs and more efficient sampling from pretrained models while also improving the generation quality at lower guidance scales to avoid the drawbacks of high CFG scales.

\gls{method} introduces no additional overhead to the sampling process and can be seamlessly integrated into existing diffusion models and samplers without extra training. Through extensive experiments across a range of models and setups {(including distilled diffusion models)}, we show that \gls{method} consistently improves sample quality, particularly in scenarios involving fewer sampling steps or lower guidance scales. Our results indicate that \gls{method} achieves higher quality metrics compared to standard sampling methods, establishing it as a universal enhancement for pretrained diffusion models under various sampling budgets and guidance scales. Furthermore, using a pretrained SiT model, \gls{method} achieves a state-of-the-art FID of 1.61 for {unguided (i.e., without CFG)} ImageNet generation at 256$\times$256 while using only 30 sampling steps instead of the standard 250.
\section{Related work}
Score-based diffusion models \citep{DBLP:conf/nips/SongE19,score-sde,sohl2015deep,hoDenoisingDiffusionProbabilistic2020} learn complex data distributions by inverting a forward process that gradually adds Gaussian noise to the data. These models have rapidly advanced generative modeling, surpassing prior approaches in both fidelity and diversity \citep{nichol2021improved,dhariwalDiffusionModelsBeat2021}. They have demonstrated state-of-the-art performance across a wide range of tasks, including unconditional and conditional image synthesis \citep{dhariwalDiffusionModelsBeat2021,karras2022elucidating,yu2024representation,karras2024guiding}, text-to-image generation \citep{sdxl,esser2024scaling,flux2024,qin2025lumina}, video generation \citep{blattmann2023align,stableVideoDiffusion,bar2024lumiere,wan2025wan}, image-to-image translation \citep{saharia2022palette,liu20232i2sb,xia2023diffi2i}, and audio generation \citep{WaveGrad,huang2023noise2music,liu2023audioldm,tian2025audiox}.

Since the introduction of DDPM \citep{hoDenoisingDiffusionProbabilistic2020}, the field has seen substantial progress, with improvements spanning network architectures \citep{hoogeboom2023simple,karras2023analyzing,peeblesScalableDiffusionModels2022,dhariwalDiffusionModelsBeat2021}, sampling techniques \citep{songDenoisingDiffusionImplicit2022,karras2022elucidating,plms,dpm_solver,salimansProgressiveDistillationFast2022}, and training strategies \citep{nichol2021improved,karras2022elucidating,score-sde,salimansProgressiveDistillationFast2022,rombachHighResolutionImageSynthesis2022}. Despite these advancements, sampling from diffusion models still require relatively high step counts, and various guidance mechanisms---such as classifier guidance \citep{dhariwalDiffusionModelsBeat2021} and classifier-free guidance \citep{hoClassifierFreeDiffusionGuidance2022}---remain critical for enhancing image quality and ensuring strong prompt alignment \citep{glide}.

A recent line of work has focused on developing better ODE solvers for the diffusion sampling process, often combined with improved training techniques to make the sampling ODE more linear \citep{dpm_solver,karras2022elucidating,esser2024scaling}. Despite these advances, most state-of-the-art samplers still require a relatively high number of sampling steps (e.g., 50 steps for Stable Diffusion XL \citep{sdxl}). Another direction of research explores distilling the diffusion model into a student network capable of sampling with fewer steps \citep{salimansProgressiveDistillationFast2022,song2023consistency,sauer2024fast}. However, step distillation remains computationally expensive, requiring long training on advanced hardware. We show that \gls{method} serves as a training-free method to improve generation quality across various sampling budgets and networks (including distilled diffusion models).

Modern diffusion models often rely on high guidance scales to achieve strong image quality and prompt alignment. However, CFG doubles inference cost, and excessive CFG scales reduce diversity and cause oversaturation \citep{sadat2024cads, sadat2025eliminating}. On the other hand, sampling with lower CFG scales and \gls{nfe} avoid these issues but typically yield blurry images lacking fine detail and coherent structure. While several methods mitigate the drawbacks of high CFG scales \citep{intervalGuidance,sadat2025eliminating,wang2024analysis}, little attention has been given to improving quality under low CFG scales and limited sampling steps. We show that \gls{method} enhances generation across both low and high CFG regimes, offering benefits under varying CFG scales and sampling budgets.

In summary, sampling from diffusion models is an expensive iterative process that might produce unrealistic outputs under certain configurations. We propose a training-free method that improves generation quality across different sampling budgets and guidance scales, particularly in low-NFE and low-CFG scenarios. Thus, we present \gls{method} as a plug-and-play enhancement for diffusion sampling.
\section{Background}
This section provides a brief overview of diffusion models. Let \(\vx \sim \pdata(\vx)\) denote a data sample, and let \(t \in [0,T]\) represent continuous time. The forward diffusion process gradually corrupts data by adding Gaussian noise: \(\vz_t = \vx + \sigma(t) \mepsilon\), where \(\mepsilon \sim \mathcal{N}(\mathbf{0}, \mI)\) and \(\sigma(t)\) defines a monotonically increasing noise schedule, satisfying \(\sigma(0) = 0\) and \(\sigma(T) = \sigma_{\textnormal{max}} \gg \sigma_{\textnormal{data}}\). As shown by \citet{karras2022elucidating}, this forward process can be described by the following ordinary differential equation (ODE):
\begin{equation}\label{eq:diffusion-ode}
    \dd\vz = - \dot{\sigma}(t) \sigma(t) \grad_{\vz_t} \log p_t(\vz_t) \dd t = -t \grad_{\vz_t} \log p_t(\vz_t) \dd t,
\end{equation}
where we choose \(\sigma(t) = t\), and \(p_t(\vz_t)\) is the distribution of noisy data at time \(t\), with \(p_0 = \pdata\) and \(p_T = \mathcal{N}(\mathbf{0}, \sigma_{\textnormal{max}}^2 \mI)\). If the time-dependent score function \(\grad_{\vz_t} \log p_t(\vz_t)\) is known, one can sample from \(\pdata\) by integrating the ODE (or its stochastic counterpart) in reverse, from \(t = T\) to \(t = 0\).

In practice, the score function is approximated using a neural denoiser \(D_{\mtheta}(\vz_t, t)\), trained to recover the clean data \(\vx\) from the noisy input \(\vz_t\). Conditional generation is supported by extending the denoiser to take an additional input \(\vy\) (e.g., class labels or text), resulting in \(D_{\mtheta}(\vz_t, t, \vy)\).

\paragraph{Classifier-free guidance (CFG)}  
Classifier-free guidance (CFG) is a technique for enhancing sample quality during inference by interpolating between unconditional and conditional model predictions \citep{hoClassifierFreeDiffusionGuidance2022}. Given an unconditional prediction \(\prednull\), the guided denoiser output at each sampling step is computed as:
\begin{equation}\label{eq:cfg}
    \predcfg = \wcfg \pred - (\wcfg - 1)\prednull,
\end{equation}
where \(w_{\textnormal{CFG}} = 1\) corresponds to no guidance, and larger values increase conditioning strength. The unconditional model is typically trained by randomly dropping the condition $\vy$ with some probability \(p \in [0.1, 0.2]\) during training. Alternatively, a separate network or the conditional model itself can be used to estimate the unconditional score \citep{karras2023analyzing,sadat2025no}. While CFG is known to improve perceptual quality, it often comes at the cost of increased oversaturation and reduced sample diversity \citep{sadat2025eliminating,sadat2024cads}.

\section{Sampling with prediction history}\label{sec:method}

Let ${t_0 > t_1 > \ldots > t_M}$ be the sampling time grid with $M{+}1$ steps. 
The model prediction at time step $t_k$ is denoted by $\pred[t_k]$. 
Given a window of size $W \geq 1$, we define the history $\mathcal{H}_k$ at step $k$ as the set of past predictions prior to $t_k$, i.e., $\mathcal{H}_k := \{\predi\}_{i \in I_k}$ for $I_k := \{\max(0, k - W), \ldots, k - 1\}$. We next show how leveraging this history enhances sampling quality. For simplicity, we define $\predcond[t_k] \doteq \pred[t_k]$, $\preduncond[t_k] \doteq \prednull[t_k]$, and $\predguided[t_k] \doteq \predcfg[t_k]$ to represent the conditional, unconditional, and CFG outputs, respectively.

\subsection{Motivation} 
We first claim that the Euler sampler for diffusion models can be interpreted as performing \gls{sgd} on a time-varying energy function. To show this, consider the ODE in \Cref{eq:diffusion-ode} and a discretization $\Set{t_0, t_1, \ldots, t_M}$. A single Euler step at time $t_k$ can be written as
\begin{align}\label{eq:energy}
\vz_{t_{k+1}} &= \vz_{t_{k}} + t_{k} (t_{k} - t_{k+1}) \grad_{\vz_{t_{k}}} \log p_{t_{k}}(\vz_{t_{k}}) \\
&= \vz_{t_k} - t_k (t_{k} - t_{k+1}) \grad_{\vz_{t_k}} E_{t_k}(\vz_{t_k}),
\end{align}
where the energy function $E_t(\vz_t)$ is defined via $p_t(\vz_t) = \tfrac{1}{Z} \exp\prn{-E_t(\vz_t)}$ for some normalization constant $Z$. This shows that each Euler step in diffusion sampling corresponds to a step of gradient descent on the time-dependent energy $E_t(\vz_t)$ with learning rate $t_k (t_{k} - t_{k+1})$.

Motivated by this new perspective, we argue that we can enhance this gradient estimate to improve the efficiency and quality of the sampling process in diffusion models. One promising approach is to augment the gradient with an additional momentum-based term similar to STORM \citep{cutkosky2019momentum}, a variance-reduction method for non-convex optimization. Specifically, given a differentiable function $f$, the momentum term in STORM is defined as $\grad f(\vz_{t_k}) - \grad f(\vz_{t_{k-1}})$, which incorporates information from consecutive steps for more stable updates. Applying this to \Cref{eq:energy}, we obtain $\grad E_{t_k}(\vz_{t_k}) - \grad E_{t_{k-1}}(\vz_{t_{k-1}})$ as the momentum term. Equivalently, this can be seen as incorporating the residual of past score terms or model outputs, since the score function corresponds to the energy gradient. In the following, we further generalize this idea to incorporate multiple past predictions rather than only the previous step (similar to multistep ODE solvers \citep{atkinson2009numerical}). An alternative perspective is given in \Cref{sec:CFG-SGD}, where we connect classifier-free guidance to gradient ascent on a specific objective. In \Cref{sec:error-theory}, we show that the history terms reduce the Euler solver’s local truncation error from $\mc{O}(h_k^2)$ to $\mc{O}(h_k^3)$, where $h_k = t_k - t_{k+1}$, thereby improving the global error from $\mc{O}(h)$ to $\mc{O}(h^2)$ with $h = \max_k h_k$.

\subsection{\texorpdfstring{\Acrlong{method}}{History-guided sampling}}
Building on the insights discussed above, we propose a novel sampling method for diffusion models, termed \acrfull{method}, which integrates past predictions into the current sampling step. We leverage the set of past predictions $\buffer$ at each sampling step $t_k$ to improve generation quality such as sharpness, details and global structure. We detail each step of \gls{method} below.

\paragraph{Buffer input} When the sampling is done with CFG, we can choose to keep track of the CFG-guided predictions $\predguided[t_k]$ or the conditional outputs $\predcond[t_k]$ as the history. We found that the CFG-guided predictions are more effective in improving the quality of sampling. Thus, we use $\predguided[t_k]$ as the inputs to $\buffer$ when CFG is enabled, leading to $\mathcal{H}_k := \{\predguided[t_i]\}_{i \in I_k}$.

\paragraph{Incorporating the history}
The next design choice is how to use $\buffer$ during sampling. Specifically, we want a function $g$ that determines the influence of past predictions on the current step. By generalizing the momentum update rule in STORM, we adopt an EMA-style weighted average that emphasizes recent predictions:
\begin{equation}
g(\buffer) = \sum_{i\in I_k} \alpha(1 - \alpha)^{k-1-i} \predguided[t_{i}],
\end{equation}
where $\alpha \in (0,1)$ is a hyperparameter. This formulation integrates information from history while prioritizing recent steps for more informative guidance. We later show that several alternative definitions of $g$, such as simple averaging, are viable options (see \Cref{sec:ablation} for more details). 

Let $\dpred[t_k] = \predguided[t_k] - g(\buffer)$ denote the guidance term in \gls{method}. A straightforward strategy for using $\dpred[t_k]$ at inference is to combine this update with the current output, analogous to CFG, via $\predguided[t_k] + \whigs \dpred[t_k]$ with scale $\whigs$. In the following, we introduce several improvements over this naive baseline that we found crucial to substantially boost performance. \Cref{sec:ablation} presents extensive ablation studies to support various design choices of \gls{method}.

\paragraph{Scheduling the guidance weight}
In our experiments, the benefits of \gls{method} were most evident during the early and middle sampling steps. We noticed that as sampling progresses, the update term introduces diminishing improvements and may even cause noisy artifacts. To address this, we use a weight schedule $\whigs(t_k)$ that adapts the scale of the guidance term according to the time step $t_k$. We employ a square-root schedule given by 
\begin{equation}
    \whigs(t) = \begin{cases}
        0 & t \leq \tmin, \\
        \whigs \sqrt{\frac{t - \tmin}{\tmax - \tmin}} & \tmin < t \leq \tmax, \\
        0 & t > \tmax.
    \end{cases}
\end{equation}

\paragraph{Optional orthogonal projection} Additionally, we found that it is sometimes beneficial to  project the update vector $\dpred[t_k]$ on \( \predguided[t_k] \) and downweight its parallel component to prevent oversaturation and color artifacts (especially at higher values of $\whigs$) \citep{sadat2025eliminating}. The parallel component in $\dpred[t_k]$ can be computed via
\begin{equation}
    \dpredpar[t_k] = \frac{\inner{\dpred[t_k]}{\predguided[t_k]}}{\inner{\predguided[t_k]}{\predguided[t_k]}} \predguided[t_k],
\end{equation}
and the orthogonal component is computed via $\dpredorth[t_k] = \predguided[t_k] - \dpredpar[t_k]$. We use $\dpred[t_k](\eta) = \dpredorth[t_k] + \eta \dpredpar[t_k]$ as the projected update direction, where $\eta \in [0, 1]$ is a hyperparameter. As demonstrated in \Cref{fig:ablation-projection} (appendix), the orthogonal projection step can mitigate oversaturation and color artifacts in the generated outputs.

\paragraph{Frequency-domain filtering}
We further observed that using $\dpred[t_k](\eta)$ as the guidance term generally leads to unrealistic color compositions in generations (see \Cref{fig:ablation-dct} in the appendix). To solve this, we employ frequency-based high-pass filtering using the \gls{dct}. Since color composition corresponds to low-frequency contents of an image, we apply \gls{dct} to the update term $\dpred[t_k](\eta)$ and attenuate its low-frequency signals with a sigmoid high-pass filter:
\begin{equation}
H(R) = \mathrm{Sigmoid}(\lambda(R - R_c)),
\end{equation}
where $R = \sqrt{f_x^2 + f_y^2}$ is the radial frequency at coordinates $(x,y)$, $R_c$ is the cutoff threshold, and $\lambda$ controls the sharpness of the transition. This procedure effectively removes the color shifts, leading to more realistic and visually consistent outputs. Accordingly, the final update rule for \gls{method} is
\begin{equation}
    \predguidedours[t_k] = \predguided[t_k] + \whigs(t_k)\ \texttt{iDCT}\!\big(H(R) \cdot \texttt{DCT}( \dpred[t_k](\eta))\big).
\end{equation}
\section{Experiments}
We now provide extensive qualitative and quantitative comparisons between standard sampling and sampling with \gls{method} to show that \gls{method} enhances the performance of diffusion models under various setups. Further experiments, ablations, and implementation details are provided in the appendix.

\paragraph{Setup} We evaluate our method primarily on text-to-image generation using Stable Diffusion models \citep{rombachHighResolutionImageSynthesis2022, sdxl, esser2024scaling}, and on class-conditional generation with ImageNet \citep{imagenet} using DiT-XL/2 \citep{peeblesScalableDiffusionModels2022}, and SiT-XL + REPA \citep{yu2024repa,leng2025repae}. For each model, we employ the default sampling algorithms (e.g., the Euler solver for Stable Diffusion XL) and rely on the official pretrained checkpoints and publicly released codebases to ensure alignment with the original implementations. Additional details regarding the experimental configurations and hyperparameters are provided in \Cref{sec:imp-detail}.  

\paragraph{Evaluation metrics} For text-to-image generation, we primarily use HPSv2 \citep{wu2023human} as our main quality and prompt alignment metric, as we found it to be the most aligned with human judgment. We also report ImageReward \citep{xu2023imagereward} and the HPSv2 win rate as additional preference scores, along with the CLIP Score \citep{Hessel2021CLIPScore} for completeness. For class-conditional models, we use the Fréchet Inception Distance (FID) \citep{fid} as the primary metric to assess both image fidelity and diversity, given its strong alignment with human visual perception in this setting. To ensure fair comparisons, all FID evaluations are conducted under a standardized setup to minimize sensitivity to implementation differences. We also report Inception Score (IS) \citep{inceptionScore}, Precision \citep{improvedPR}, and Recall \citep{improvedPR} as complementary metrics to separately evaluate sample quality and diversity.

\subsection{Main results} 

\paragraph{Qualitative comparisons} We first qualitatively evaluate the impact of \gls{method} on generation quality under three different regimes: the normal setup using practical CFG scales with high \gls{nfe} (\cref{fig:qualitative-main}), sampling with fewer number of \gls{nfe} (\Cref{fig:qualitative-main-low-nfe}), and sampling under low CFG scales (\Cref{fig:qualitative-main-low-cfg}). We note that \gls{method} is able to improve generation quality under all these settings, showing that its benefits cover a wide range of CFG scales and sampling budgets.
\figMain
\figMainLowNFE
\figMainLowCFG

\paragraph{\gls{method} vs CFG scale}
Next, we quantitatively evaluate the effect of adding \gls{method} to the sampling process on generation quality across different CFG scales $\wcfg$. \Cref{fig:cfg-scale} shows that \gls{method} is beneficial across all guidance scales, maintaining a significant margin in terms of HPS win rate. 

\paragraph{\gls{method} vs the number of sampling steps}
 Similarly, \Cref{fig:nfe} shows that \gls{method} outperforms the CFG baseline in generation quality across all \gls{nfe}. This demonstrates that \gls{method} is beneficial for all sampling budgets, and its advantages are not limited to a specific number of sampling steps.

\paragraph{Comparison with different models} Furthermore, we evaluate the impact of \gls{method} on output quality across different models and metrics in \Cref{tab:fid-results,tab:sd-results}, using a fixed guidance scale and sampling steps per baseline for both sampling with and without \gls{method} to ensure a fair comparison. As before, \gls{method} consistently improves generation quality across diverse setups and metrics, indicating that it serves as a universal enhancement to standard diffusion sampling.
\CFGScalePlot
\tabResultFID
\tabResultSD

\subsection{ImageNet benchmark} 
\Cref{tab:repa} shows that \gls{method} improves the performance of recent state-of-the-art models on conditional ImageNet generation at 256$\times$256 resolution. For unguided generation (i.e., without CFG), \gls{method} reduces the state-of-the-art FID from 1.83 to 1.61 using only 30 sampling steps, without any retraining of the base model. Moreover, by improving sample quality at lower \gls{nfe}, \gls{method} provides a training-free way to accelerate sampling from existing models. For guided generation with CFG, \gls{method} matches the performance of the base models in just 40 sampling steps, compared to 250 steps with the default sampler. Finally, this experiment shows that \gls{method} is compatible with guidance interval \citep{intervalGuidance}, as both CFG baselines incorporate this technique in their samplers.
\tabResultREPA

\subsection{Compatibility with Distilled Models} 
Finally, we also demonstrate that \gls{method} is compatible with distilled diffusion models that use fewer sampling steps. Diffusion distillation reduces sampling steps by training a student model to replicate the performance of the base model. As shown in \Cref{tab:distillation}, \gls{method} enhances the sampling quality of these distilled models, achieving a significant win rate according to the HPS score. This shows that the benefits of \gls{method} are complementary to diffusion distillation, and that \gls{method} can be applied as a training-free method to further enhance the quality of distilled diffusion models.  
\tabResultDistilled

\section{Discussion and conclusion}\label{sec:conclusion}
In this work, we presented \acrfull{method}, a simple, training-free modification of diffusion sampling that leverages the history of model predictions to steer the reverse process toward higher quality and more coherent images. By applying a history-informed correction that emphasizes the deviation between the current prediction and a weighted average of past predictions, \gls{method} mitigates blur and artifacts that often arise during sampling, particularly with fewer number of \gls{nfe} or lower CFG scales. \gls{method} requires no additional network evaluations, integrates seamlessly with existing samplers and architectures, and is fully plug-and-play. Our experiments demonstrated that \gls{method} consistently improves perceptual quality and fidelity across diverse models and sampling budgets, with especially strong benefits in low-NFE and low-CFG regimes. While highly effective, \gls{method} still inherits, albeit to a lesser extent, the biases and some limitations of the underlying diffusion models, and addressing these challenges remains a promising direction for future work.

\clearpage
\subsubsection*{Broader Impact Statement}
Our method has the potential to improve the realism and quality of outputs produced by diffusion models without requiring expensive retraining, making it practically valuable for applications in visual content creation. However, as generative modeling technologies continue to evolve, the ease of producing and distributing synthetic or misleading content also increases. While such advances can significantly boost creativity and productivity, they also raise important ethical and societal concerns. It is therefore essential to raise awareness about the potential misuse of generative models and to carefully consider the broader societal implications of their deployment. For an in-depth discussion on ethics and creativity in generative modeling, we refer readers to \cite{lin2025comprehensive}.

\subsubsection*{Reproducibility Statement}
Our work builds upon the official implementations of the pretrained models cited in the main text. \Cref{alg:method} gives the algorithm for applying \gls{method} at inference, and the pseudocode for \gls{method} is presented in \Cref{alg:code-utility,alg:code-main}. Additional implementation details, including the hyperparameters used in the main experiments, are given in \Cref{sec:imp-detail}.

% \subsubsection*{Acknowledgments}
% Use unnumbered third level headings for the acknowledgments. All
% acknowledgments, including those to funding agencies, go at the end of the paper.
% Only add this information once your submission is accepted and deanonymized. 

\bibliography{ref}

\begin{thebibliography}{61}
\providecommand{\natexlab}[1]{#1}
\providecommand{\url}[1]{\texttt{#1}}
\expandafter\ifx\csname urlstyle\endcsname\relax
  \providecommand{\doi}[1]{doi: #1}\else
  \providecommand{\doi}{doi: \begingroup \urlstyle{rm}\Url}\fi

\bibitem[Atkinson et~al.(2009)Atkinson, Han, and Stewart]{atkinson2009numerical}
Kendall Atkinson, Weimin Han, and David~E Stewart.
\newblock \emph{Numerical solution of ordinary differential equations}.
\newblock John Wiley \& Sons, 2009.

\bibitem[Bar-Tal et~al.(2024)Bar-Tal, Chefer, Tov, Herrmann, Paiss, Zada, Ephrat, Hur, Liu, Raj, et~al.]{bar2024lumiere}
Omer Bar-Tal, Hila Chefer, Omer Tov, Charles Herrmann, Roni Paiss, Shiran Zada, Ariel Ephrat, Junhwa Hur, Guanghui Liu, Amit Raj, et~al.
\newblock Lumiere: A space-time diffusion model for video generation.
\newblock In \emph{SIGGRAPH Asia 2024 Conference Papers}, pp.\  1--11, 2024.

\bibitem[Blattmann et~al.(2023{\natexlab{a}})Blattmann, Dockhorn, Kulal, Mendelevitch, Kilian, Lorenz, Levi, English, Voleti, Letts, Jampani, and Rombach]{stableVideoDiffusion}
Andreas Blattmann, Tim Dockhorn, Sumith Kulal, Daniel Mendelevitch, Maciej Kilian, Dominik Lorenz, Yam Levi, Zion English, Vikram Voleti, Adam Letts, Varun Jampani, and Robin Rombach.
\newblock Stable video diffusion: Scaling latent video diffusion models to large datasets.
\newblock \emph{CoRR}, abs/2311.15127, 2023{\natexlab{a}}.
\newblock \doi{10.48550/ARXIV.2311.15127}.
\newblock URL \url{https://doi.org/10.48550/arXiv.2311.15127}.

\bibitem[Blattmann et~al.(2023{\natexlab{b}})Blattmann, Rombach, Ling, Dockhorn, Kim, Fidler, and Kreis]{blattmann2023align}
Andreas Blattmann, Robin Rombach, Huan Ling, Tim Dockhorn, Seung~Wook Kim, Sanja Fidler, and Karsten Kreis.
\newblock Align your latents: High-resolution video synthesis with latent diffusion models.
\newblock In \emph{Proceedings of the IEEE/CVF Conference on Computer Vision and Pattern Recognition}, pp.\  22563--22575, 2023{\natexlab{b}}.

\bibitem[Chen et~al.(2021)Chen, Zhang, Zen, Weiss, Norouzi, and Chan]{WaveGrad}
Nanxin Chen, Yu~Zhang, Heiga Zen, Ron~J. Weiss, Mohammad Norouzi, and William Chan.
\newblock Wavegrad: Estimating gradients for waveform generation.
\newblock In \emph{9th International Conference on Learning Representations, {ICLR} 2021, Virtual Event, Austria, May 3-7, 2021}. OpenReview.net, 2021.
\newblock URL \url{https://openreview.net/forum?id=NsMLjcFaO8O}.

\bibitem[Cutkosky \& Orabona(2019)Cutkosky and Orabona]{cutkosky2019momentum}
Ashok Cutkosky and Francesco Orabona.
\newblock Momentum-based variance reduction in non-convex sgd.
\newblock \emph{Advances in neural information processing systems}, 32, 2019.

\bibitem[Dhariwal \& Nichol(2021)Dhariwal and Nichol]{dhariwalDiffusionModelsBeat2021}
Prafulla Dhariwal and Alexander~Quinn Nichol.
\newblock Diffusion models beat gans on image synthesis.
\newblock In Marc'Aurelio Ranzato, Alina Beygelzimer, Yann~N. Dauphin, Percy Liang, and Jennifer~Wortman Vaughan (eds.), \emph{Advances in Neural Information Processing Systems 34: Annual Conference on Neural Information Processing Systems 2021, NeurIPS 2021, December 6-14, 2021, virtual}, pp.\  8780--8794, 2021.
\newblock URL \url{https://proceedings.neurips.cc/paper/2021/hash/49ad23d1ec9fa4bd8d77d02681df5cfa-Abstract.html}.

\bibitem[Esser et~al.(2024)Esser, Kulal, Blattmann, Entezari, M{\"u}ller, Saini, Levi, Lorenz, Sauer, Boesel, et~al.]{esser2024scaling}
Patrick Esser, Sumith Kulal, Andreas Blattmann, Rahim Entezari, Jonas M{\"u}ller, Harry Saini, Yam Levi, Dominik Lorenz, Axel Sauer, Frederic Boesel, et~al.
\newblock Scaling rectified flow transformers for high-resolution image synthesis.
\newblock \emph{arXiv preprint arXiv:2403.03206}, 2024.

\bibitem[Hairer et~al.(1993)Hairer, N\o{}rsett, and Wanner]{hairer1993solving}
E.~Hairer, S.~P. N\o{}rsett, and G.~Wanner.
\newblock \emph{Solving ordinary differential equations I (2nd revised. ed.): nonstiff problems}.
\newblock Springer-Verlag, Berlin, Heidelberg, 1993.
\newblock ISBN 0387566708.

\bibitem[Hessel et~al.(2021)Hessel, Holtzman, Forbes, Bras, and Choi]{Hessel2021CLIPScore}
Jack Hessel, Ari Holtzman, Maxwell Forbes, Ronan~Le Bras, and Yejin Choi.
\newblock Clipscore: {A} reference-free evaluation metric for image captioning.
\newblock \emph{CoRR}, abs/2104.08718, 2021.
\newblock URL \url{https://arxiv.org/abs/2104.08718}.

\bibitem[Heusel et~al.(2017)Heusel, Ramsauer, Unterthiner, Nessler, and Hochreiter]{fid}
Martin Heusel, Hubert Ramsauer, Thomas Unterthiner, Bernhard Nessler, and Sepp Hochreiter.
\newblock Gans trained by a two time-scale update rule converge to a local nash equilibrium.
\newblock In Isabelle Guyon, Ulrike von Luxburg, Samy Bengio, Hanna~M. Wallach, Rob Fergus, S.~V.~N. Vishwanathan, and Roman Garnett (eds.), \emph{Advances in Neural Information Processing Systems 30: Annual Conference on Neural Information Processing Systems 2017, December 4-9, 2017, Long Beach, CA, {USA}}, pp.\  6626--6637, 2017.
\newblock URL \url{https://proceedings.neurips.cc/paper/2017/hash/8a1d694707eb0fefe65871369074926d-Abstract.html}.

\bibitem[Ho \& Salimans(2022)Ho and Salimans]{hoClassifierFreeDiffusionGuidance2022}
Jonathan Ho and Tim Salimans.
\newblock Classifier-free diffusion guidance.
\newblock \emph{CoRR}, abs/2207.12598, 2022.
\newblock \doi{10.48550/arXiv.2207.12598}.
\newblock URL \url{https://doi.org/10.48550/arXiv.2207.12598}.

\bibitem[Ho et~al.(2020)Ho, Jain, and Abbeel]{hoDenoisingDiffusionProbabilistic2020}
Jonathan Ho, Ajay Jain, and Pieter Abbeel.
\newblock Denoising diffusion probabilistic models.
\newblock In Hugo Larochelle, Marc'Aurelio Ranzato, Raia Hadsell, Maria{-}Florina Balcan, and Hsuan{-}Tien Lin (eds.), \emph{Advances in Neural Information Processing Systems 33: Annual Conference on Neural Information Processing Systems 2020, NeurIPS 2020, December 6-12, 2020, virtual}, 2020.
\newblock URL \url{https://proceedings.neurips.cc/paper/2020/hash/4c5bcfec8584af0d967f1ab10179ca4b-Abstract.html}.

\bibitem[Ho et~al.(2021)Ho, Saharia, Chan, Fleet, Norouzi, and Salimans]{hoCascadedDiffusionModels2021}
Jonathan Ho, Chitwan Saharia, William Chan, David~J. Fleet, Mohammad Norouzi, and Tim Salimans.
\newblock Cascaded diffusion models for high fidelity image generation.
\newblock \emph{CoRR}, abs/2106.15282, 2021.
\newblock URL \url{https://arxiv.org/abs/2106.15282}.

\bibitem[Hoogeboom et~al.(2023)Hoogeboom, Heek, and Salimans]{hoogeboom2023simple}
Emiel Hoogeboom, Jonathan Heek, and Tim Salimans.
\newblock simple diffusion: End-to-end diffusion for high resolution images.
\newblock \emph{CoRR}, abs/2301.11093, 2023.
\newblock \doi{10.48550/arXiv.2301.11093}.
\newblock URL \url{https://doi.org/10.48550/arXiv.2301.11093}.

\bibitem[Huang et~al.(2023)Huang, Park, Wang, Denk, Ly, Chen, Zhang, Zhang, Yu, Frank, Engel, Le, Chan, and Han]{huang2023noise2music}
Qingqing Huang, Daniel~S. Park, Tao Wang, Timo~I. Denk, Andy Ly, Nanxin Chen, Zhengdong Zhang, Zhishuai Zhang, Jiahui Yu, Christian~Havn{\o} Frank, Jesse~H. Engel, Quoc~V. Le, William Chan, and Wei Han.
\newblock Noise2music: Text-conditioned music generation with diffusion models.
\newblock \emph{CoRR}, abs/2302.03917, 2023.
\newblock \doi{10.48550/arXiv.2302.03917}.
\newblock URL \url{https://doi.org/10.48550/arXiv.2302.03917}.

\bibitem[Karras et~al.(2022)Karras, Aittala, Aila, and Laine]{karras2022elucidating}
Tero Karras, Miika Aittala, Timo Aila, and Samuli Laine.
\newblock Elucidating the design space of diffusion-based generative models.
\newblock 2022.
\newblock URL \url{https://openreview.net/forum?id=k7FuTOWMOc7}.

\bibitem[Karras et~al.(2023)Karras, Aittala, Lehtinen, Hellsten, Aila, and Laine]{karras2023analyzing}
Tero Karras, Miika Aittala, Jaakko Lehtinen, Janne Hellsten, Timo Aila, and Samuli Laine.
\newblock Analyzing and improving the training dynamics of diffusion models, 2023.

\bibitem[Karras et~al.(2024)Karras, Aittala, Kynk{\"a}{\"a}nniemi, Lehtinen, Aila, and Laine]{karras2024guiding}
Tero Karras, Miika Aittala, Tuomas Kynk{\"a}{\"a}nniemi, Jaakko Lehtinen, Timo Aila, and Samuli Laine.
\newblock Guiding a diffusion model with a bad version of itself.
\newblock In \emph{The Thirty-eighth Annual Conference on Neural Information Processing Systems}, 2024.
\newblock URL \url{https://openreview.net/forum?id=bg6fVPVs3s}.

\bibitem[Kynk{\"{a}}{\"{a}}nniemi et~al.(2019)Kynk{\"{a}}{\"{a}}nniemi, Karras, Laine, Lehtinen, and Aila]{improvedPR}
Tuomas Kynk{\"{a}}{\"{a}}nniemi, Tero Karras, Samuli Laine, Jaakko Lehtinen, and Timo Aila.
\newblock Improved precision and recall metric for assessing generative models.
\newblock In Hanna~M. Wallach, Hugo Larochelle, Alina Beygelzimer, Florence d'Alch{\'{e}}{-}Buc, Emily~B. Fox, and Roman Garnett (eds.), \emph{Advances in Neural Information Processing Systems 32: Annual Conference on Neural Information Processing Systems 2019, NeurIPS 2019, December 8-14, 2019, Vancouver, BC, Canada}, pp.\  3929--3938, 2019.
\newblock URL \url{https://proceedings.neurips.cc/paper/2019/hash/0234c510bc6d908b28c70ff313743079-Abstract.html}.

\bibitem[Kynk{\"{a}}{\"{a}}nniemi et~al.(2024)Kynk{\"{a}}{\"{a}}nniemi, Aittala, Karras, Laine, Aila, and Lehtinen]{intervalGuidance}
Tuomas Kynk{\"{a}}{\"{a}}nniemi, Miika Aittala, Tero Karras, Samuli Laine, Timo Aila, and Jaakko Lehtinen.
\newblock Applying guidance in a limited interval improves sample and distribution quality in diffusion models.
\newblock \emph{CoRR}, abs/2404.07724, 2024.
\newblock \doi{10.48550/ARXIV.2404.07724}.
\newblock URL \url{https://doi.org/10.48550/arXiv.2404.07724}.

\bibitem[Labs(2024)]{flux2024}
Black~Forest Labs.
\newblock Flux.
\newblock \url{https://github.com/black-forest-labs/flux}, 2024.

\bibitem[Leng et~al.(2025)Leng, Singh, Hou, Xing, Xie, and Zheng]{leng2025repae}
Xingjian Leng, Jaskirat Singh, Yunzhong Hou, Zhenchang Xing, Saining Xie, and Liang Zheng.
\newblock Repa-e: Unlocking vae for end-to-end tuning with latent diffusion transformers.
\newblock \emph{arXiv preprint arXiv:2504.10483}, 2025.

\bibitem[Lin et~al.(2024)Lin, Wang, and Yang]{lin2024sdxllightning}
Shanchuan Lin, Anran Wang, and Xiao Yang.
\newblock Sdxl-lightning: Progressive adversarial diffusion distillation, 2024.

\bibitem[Lin et~al.(2014)Lin, Maire, Belongie, Hays, Perona, Ramanan, Doll{\'{a}}r, and Zitnick]{lin2014microsoft}
Tsung{-}Yi Lin, Michael Maire, Serge~J. Belongie, James Hays, Pietro Perona, Deva Ramanan, Piotr Doll{\'{a}}r, and C.~Lawrence Zitnick.
\newblock Microsoft {COCO:} common objects in context.
\newblock In David~J. Fleet, Tom{\'{a}}s Pajdla, Bernt Schiele, and Tinne Tuytelaars (eds.), \emph{Computer Vision - {ECCV} 2014 - 13th European Conference, Zurich, Switzerland, September 6-12, 2014, Proceedings, Part {V}}, volume 8693 of \emph{Lecture Notes in Computer Science}, pp.\  740--755. Springer, 2014.
\newblock \doi{10.1007/978-3-319-10602-1\_48}.
\newblock URL \url{https://doi.org/10.1007/978-3-319-10602-1\_48}.

\bibitem[Lin \& Losavio(2025)Lin and Losavio]{lin2025comprehensive}
Xiaojian Lin and Michael Losavio.
\newblock A comprehensive survey on bias and fairness in generative ai: Legal, ethical, and technical responses.
\newblock \emph{Ethical, and Technical Responses (March 04, 2025)}, 2025.

\bibitem[Liu et~al.(2023{\natexlab{a}})Liu, Vahdat, Huang, Theodorou, Nie, and Anandkumar]{liu20232i2sb}
Guan{-}Horng Liu, Arash Vahdat, De{-}An Huang, Evangelos~A. Theodorou, Weili Nie, and Anima Anandkumar.
\newblock I\({}^{\mbox{2}}\)sb: Image-to-image schr{\"{o}}dinger bridge.
\newblock \emph{CoRR}, abs/2302.05872, 2023{\natexlab{a}}.
\newblock \doi{10.48550/arXiv.2302.05872}.
\newblock URL \url{https://doi.org/10.48550/arXiv.2302.05872}.

\bibitem[Liu et~al.(2023{\natexlab{b}})Liu, Chen, Yuan, Mei, Liu, Mandic, Wang, and Plumbley]{liu2023audioldm}
Haohe Liu, Zehua Chen, Yi~Yuan, Xinhao Mei, Xubo Liu, Danilo Mandic, Wenwu Wang, and Mark~D Plumbley.
\newblock Audioldm: Text-to-audio generation with latent diffusion models.
\newblock \emph{arXiv preprint arXiv:2301.12503}, 2023{\natexlab{b}}.

\bibitem[Liu et~al.(2022)Liu, Ren, Lin, and Zhao]{plms}
Luping Liu, Yi~Ren, Zhijie Lin, and Zhou Zhao.
\newblock Pseudo numerical methods for diffusion models on manifolds.
\newblock In \emph{The Tenth International Conference on Learning Representations, {ICLR} 2022, Virtual Event, April 25-29, 2022}. OpenReview.net, 2022.
\newblock URL \url{https://openreview.net/forum?id=PlKWVd2yBkY}.

\bibitem[Lu et~al.(2022)Lu, Zhou, Bao, Chen, Li, and Zhu]{dpm_solver}
Cheng Lu, Yuhao Zhou, Fan Bao, Jianfei Chen, Chongxuan Li, and Jun Zhu.
\newblock Dpm-solver: {A} fast {ODE} solver for diffusion probabilistic model sampling in around 10 steps.
\newblock In \emph{NeurIPS}, 2022.
\newblock URL \url{http://papers.nips.cc/paper\_files/paper/2022/hash/260a14acce2a89dad36adc8eefe7c59e-Abstract-Conference.html}.

\bibitem[Nichol \& Dhariwal(2021)Nichol and Dhariwal]{nichol2021improved}
Alexander~Quinn Nichol and Prafulla Dhariwal.
\newblock Improved denoising diffusion probabilistic models.
\newblock In Marina Meila and Tong Zhang (eds.), \emph{Proceedings of the 38th International Conference on Machine Learning, {ICML} 2021, 18-24 July 2021, Virtual Event}, volume 139 of \emph{Proceedings of Machine Learning Research}, pp.\  8162--8171. {PMLR}, 2021.
\newblock URL \url{http://proceedings.mlr.press/v139/nichol21a.html}.

\bibitem[Nichol et~al.(2022)Nichol, Dhariwal, Ramesh, Shyam, Mishkin, McGrew, Sutskever, and Chen]{glide}
Alexander~Quinn Nichol, Prafulla Dhariwal, Aditya Ramesh, Pranav Shyam, Pamela Mishkin, Bob McGrew, Ilya Sutskever, and Mark Chen.
\newblock {GLIDE:} towards photorealistic image generation and editing with text-guided diffusion models.
\newblock In Kamalika Chaudhuri, Stefanie Jegelka, Le~Song, Csaba Szepesv{\'{a}}ri, Gang Niu, and Sivan Sabato (eds.), \emph{International Conference on Machine Learning, {ICML} 2022, 17-23 July 2022, Baltimore, Maryland, {USA}}, volume 162 of \emph{Proceedings of Machine Learning Research}, pp.\  16784--16804. {PMLR}, 2022.
\newblock URL \url{https://proceedings.mlr.press/v162/nichol22a.html}.

\bibitem[Peebles \& Xie(2022)Peebles and Xie]{peeblesScalableDiffusionModels2022}
William Peebles and Saining Xie.
\newblock Scalable diffusion models with transformers.
\newblock \emph{CoRR}, abs/2212.09748, 2022.
\newblock \doi{10.48550/arXiv.2212.09748}.
\newblock URL \url{https://doi.org/10.48550/arXiv.2212.09748}.

\bibitem[Podell et~al.(2023)Podell, English, Lacey, Blattmann, Dockhorn, M{\"{u}}ller, Penna, and Rombach]{sdxl}
Dustin Podell, Zion English, Kyle Lacey, Andreas Blattmann, Tim Dockhorn, Jonas M{\"{u}}ller, Joe Penna, and Robin Rombach.
\newblock {SDXL:} improving latent diffusion models for high-resolution image synthesis.
\newblock \emph{CoRR}, abs/2307.01952, 2023.
\newblock \doi{10.48550/ARXIV.2307.01952}.
\newblock URL \url{https://doi.org/10.48550/arXiv.2307.01952}.

\bibitem[Qin et~al.(2025)Qin, Zhuo, Xin, Du, Li, Fu, Lu, Yuan, Li, Liu, et~al.]{qin2025lumina}
Qi~Qin, Le~Zhuo, Yi~Xin, Ruoyi Du, Zhen Li, Bin Fu, Yiting Lu, Jiakang Yuan, Xinyue Li, Dongyang Liu, et~al.
\newblock Lumina-image 2.0: A unified and efficient image generative framework.
\newblock \emph{arXiv preprint arXiv:2503.21758}, 2025.

\bibitem[Rombach et~al.(2022)Rombach, Blattmann, Lorenz, Esser, and Ommer]{rombachHighResolutionImageSynthesis2022}
Robin Rombach, Andreas Blattmann, Dominik Lorenz, Patrick Esser, and Bj{\"{o}}rn Ommer.
\newblock High-resolution image synthesis with latent diffusion models.
\newblock In \emph{{IEEE/CVF} Conference on Computer Vision and Pattern Recognition, {CVPR} 2022, New Orleans, LA, USA, June 18-24, 2022}, pp.\  10674--10685. {IEEE}, 2022.
\newblock \doi{10.1109/CVPR52688.2022.01042}.
\newblock URL \url{https://doi.org/10.1109/CVPR52688.2022.01042}.

\bibitem[Russakovsky et~al.(2015)Russakovsky, Deng, Su, Krause, Satheesh, Ma, Huang, Karpathy, Khosla, Bernstein, Berg, and Fei{-}Fei]{imagenet}
Olga Russakovsky, Jia Deng, Hao Su, Jonathan Krause, Sanjeev Satheesh, Sean Ma, Zhiheng Huang, Andrej Karpathy, Aditya Khosla, Michael~S. Bernstein, Alexander~C. Berg, and Li~Fei{-}Fei.
\newblock Imagenet large scale visual recognition challenge.
\newblock \emph{Int. J. Comput. Vis.}, 115\penalty0 (3):\penalty0 211--252, 2015.
\newblock \doi{10.1007/s11263-015-0816-y}.
\newblock URL \url{https://doi.org/10.1007/s11263-015-0816-y}.

\bibitem[Sadat et~al.(2024)Sadat, Buhmann, Bradley, Hilliges, and Weber]{sadat2024cads}
Seyedmorteza Sadat, Jakob Buhmann, Derek Bradley, Otmar Hilliges, and Romann~M. Weber.
\newblock {CADS}: Unleashing the diversity of diffusion models through condition-annealed sampling.
\newblock In \emph{The Twelfth International Conference on Learning Representations}, 2024.
\newblock URL \url{https://openreview.net/forum?id=zMoNrajk2X}.

\bibitem[Sadat et~al.(2025{\natexlab{a}})Sadat, Hilliges, and Weber]{sadat2025eliminating}
Seyedmorteza Sadat, Otmar Hilliges, and Romann~M. Weber.
\newblock Eliminating oversaturation and artifacts of high guidance scales in diffusion models.
\newblock In \emph{The Thirteenth International Conference on Learning Representations}, 2025{\natexlab{a}}.
\newblock URL \url{https://openreview.net/forum?id=e2ONKX6qzJ}.

\bibitem[Sadat et~al.(2025{\natexlab{b}})Sadat, Kansy, Hilliges, and Weber]{sadat2025no}
Seyedmorteza Sadat, Manuel Kansy, Otmar Hilliges, and Romann~M. Weber.
\newblock No training, no problem: Rethinking classifier-free guidance for diffusion models.
\newblock In \emph{The Thirteenth International Conference on Learning Representations}, 2025{\natexlab{b}}.
\newblock URL \url{https://openreview.net/forum?id=b3CzCCCILJ}.

\bibitem[Saharia et~al.(2022{\natexlab{a}})Saharia, Chan, Chang, Lee, Ho, Salimans, Fleet, and Norouzi]{saharia2022palette}
Chitwan Saharia, William Chan, Huiwen Chang, Chris~A. Lee, Jonathan Ho, Tim Salimans, David~J. Fleet, and Mohammad Norouzi.
\newblock Palette: Image-to-image diffusion models.
\newblock In Munkhtsetseg Nandigjav, Niloy~J. Mitra, and Aaron Hertzmann (eds.), \emph{{SIGGRAPH} '22: Special Interest Group on Computer Graphics and Interactive Techniques Conference, Vancouver, BC, Canada, August 7 - 11, 2022}, pp.\  15:1--15:10. {ACM}, 2022{\natexlab{a}}.
\newblock \doi{10.1145/3528233.3530757}.
\newblock URL \url{https://doi.org/10.1145/3528233.3530757}.

\bibitem[Saharia et~al.(2022{\natexlab{b}})Saharia, Chan, Saxena, Li, Whang, Denton, Ghasemipour, Lopes, Ayan, Salimans, Ho, Fleet, and Norouzi]{saharia2022photorealistic}
Chitwan Saharia, William Chan, Saurabh Saxena, Lala Li, Jay Whang, Emily~L. Denton, Seyed Kamyar~Seyed Ghasemipour, Raphael~Gontijo Lopes, Burcu~Karagol Ayan, Tim Salimans, Jonathan Ho, David~J. Fleet, and Mohammad Norouzi.
\newblock Photorealistic text-to-image diffusion models with deep language understanding.
\newblock 2022{\natexlab{b}}.
\newblock URL \url{http://papers.nips.cc/paper\_files/paper/2022/hash/ec795aeadae0b7d230fa35cbaf04c041-Abstract-Conference.html}.

\bibitem[Salimans \& Ho(2022)Salimans and Ho]{salimansProgressiveDistillationFast2022}
Tim Salimans and Jonathan Ho.
\newblock Progressive distillation for fast sampling of diffusion models.
\newblock In \emph{The Tenth International Conference on Learning Representations, {ICLR} 2022, Virtual Event, April 25-29, 2022}. OpenReview.net, 2022.
\newblock URL \url{https://openreview.net/forum?id=TIdIXIpzhoI}.

\bibitem[Salimans et~al.(2016)Salimans, Goodfellow, Zaremba, Cheung, Radford, and Chen]{inceptionScore}
Tim Salimans, Ian~J. Goodfellow, Wojciech Zaremba, Vicki Cheung, Alec Radford, and Xi~Chen.
\newblock Improved techniques for training gans.
\newblock In Daniel~D. Lee, Masashi Sugiyama, Ulrike von Luxburg, Isabelle Guyon, and Roman Garnett (eds.), \emph{Advances in Neural Information Processing Systems 29: Annual Conference on Neural Information Processing Systems 2016, December 5-10, 2016, Barcelona, Spain}, pp.\  2226--2234, 2016.
\newblock URL \url{https://proceedings.neurips.cc/paper/2016/hash/8a3363abe792db2d8761d6403605aeb7-Abstract.html}.

\bibitem[Sauer et~al.(2024)Sauer, Boesel, Dockhorn, Blattmann, Esser, and Rombach]{sauer2024fast}
Axel Sauer, Frederic Boesel, Tim Dockhorn, Andreas Blattmann, Patrick Esser, and Robin Rombach.
\newblock Fast high-resolution image synthesis with latent adversarial diffusion distillation.
\newblock In Takeo Igarashi, Ariel Shamir, and Hao~(Richard) Zhang (eds.), \emph{{SIGGRAPH} Asia 2024 Conference Papers, {SA} 2024, Tokyo, Japan, December 3-6, 2024}, pp.\  106:1--106:11. {ACM}, 2024.
\newblock \doi{10.1145/3680528.3687625}.
\newblock URL \url{https://doi.org/10.1145/3680528.3687625}.

\bibitem[SD-Community()]{sdxl-flash}
SD-Community.
\newblock Sdxl flash in collaboration with project fluently.
\newblock \url{https://huggingface.co/sd-community/sdxl-flash}.
\newblock Accessed: 2024-09-08.

\bibitem[Sohl{-}Dickstein et~al.(2015)Sohl{-}Dickstein, Weiss, Maheswaranathan, and Ganguli]{sohl2015deep}
Jascha Sohl{-}Dickstein, Eric~A. Weiss, Niru Maheswaranathan, and Surya Ganguli.
\newblock Deep unsupervised learning using nonequilibrium thermodynamics.
\newblock 37:\penalty0 2256--2265, 2015.
\newblock URL \url{http://proceedings.mlr.press/v37/sohl-dickstein15.html}.

\bibitem[Song et~al.(2021{\natexlab{a}})Song, Meng, and Ermon]{songDenoisingDiffusionImplicit2022}
Jiaming Song, Chenlin Meng, and Stefano Ermon.
\newblock Denoising diffusion implicit models.
\newblock In \emph{9th International Conference on Learning Representations, {ICLR} 2021, Virtual Event, Austria, May 3-7, 2021}. OpenReview.net, 2021{\natexlab{a}}.
\newblock URL \url{https://openreview.net/forum?id=St1giarCHLP}.

\bibitem[Song \& Ermon(2019)Song and Ermon]{DBLP:conf/nips/SongE19}
Yang Song and Stefano Ermon.
\newblock Generative modeling by estimating gradients of the data distribution.
\newblock In Hanna~M. Wallach, Hugo Larochelle, Alina Beygelzimer, Florence d'Alch{\'{e}}{-}Buc, Emily~B. Fox, and Roman Garnett (eds.), \emph{Advances in Neural Information Processing Systems 32: Annual Conference on Neural Information Processing Systems 2019, NeurIPS 2019, December 8-14, 2019, Vancouver, BC, Canada}, pp.\  11895--11907, 2019.
\newblock URL \url{https://proceedings.neurips.cc/paper/2019/hash/3001ef257407d5a371a96dcd947c7d93-Abstract.html}.

\bibitem[Song et~al.(2021{\natexlab{b}})Song, Sohl{-}Dickstein, Kingma, Kumar, Ermon, and Poole]{score-sde}
Yang Song, Jascha Sohl{-}Dickstein, Diederik~P. Kingma, Abhishek Kumar, Stefano Ermon, and Ben Poole.
\newblock Score-based generative modeling through stochastic differential equations.
\newblock In \emph{9th International Conference on Learning Representations, {ICLR} 2021, Virtual Event, Austria, May 3-7, 2021}. OpenReview.net, 2021{\natexlab{b}}.
\newblock URL \url{https://openreview.net/forum?id=PxTIG12RRHS}.

\bibitem[Song et~al.(2023)Song, Dhariwal, Chen, and Sutskever]{song2023consistency}
Yang Song, Prafulla Dhariwal, Mark Chen, and Ilya Sutskever.
\newblock Consistency models.
\newblock In Andreas Krause, Emma Brunskill, Kyunghyun Cho, Barbara Engelhardt, Sivan Sabato, and Jonathan Scarlett (eds.), \emph{International Conference on Machine Learning, {ICML} 2023, 23-29 July 2023, Honolulu, Hawaii, {USA}}, volume 202 of \emph{Proceedings of Machine Learning Research}, pp.\  32211--32252. {PMLR}, 2023.
\newblock URL \url{https://proceedings.mlr.press/v202/song23a.html}.

\bibitem[Tian et~al.(2025)Tian, Jin, Liu, Yuan, Tan, Chen, Xue, and Guo]{tian2025audiox}
Zeyue Tian, Yizhu Jin, Zhaoyang Liu, Ruibin Yuan, Xu~Tan, Qifeng Chen, Wei Xue, and Yike Guo.
\newblock Audiox: Diffusion transformer for anything-to-audio generation, 2025.

\bibitem[Wan et~al.(2025)Wan, Wang, Ai, Wen, Mao, Xie, Chen, Yu, Zhao, Yang, et~al.]{wan2025wan}
Team Wan, Ang Wang, Baole Ai, Bin Wen, Chaojie Mao, Chen-Wei Xie, Di~Chen, Feiwu Yu, Haiming Zhao, Jianxiao Yang, et~al.
\newblock Wan: Open and advanced large-scale video generative models.
\newblock \emph{arXiv preprint arXiv:2503.20314}, 2025.

\bibitem[Wang et~al.(2024)Wang, Dufour, Andreou, Cani, Abrevaya, Picard, and Kalogeiton]{wang2024analysis}
Xi~Wang, Nicolas Dufour, Nefeli Andreou, Marie{-}Paule Cani, Victoria~Fern{\'{a}}ndez Abrevaya, David Picard, and Vicky Kalogeiton.
\newblock Analysis of classifier-free guidance weight schedulers.
\newblock \emph{Trans. Mach. Learn. Res.}, 2024, 2024.
\newblock URL \url{https://openreview.net/forum?id=SUMtDJqicd}.

\bibitem[Wu et~al.(2023)Wu, Hao, Sun, Chen, Zhu, Zhao, and Li]{wu2023human}
Xiaoshi Wu, Yiming Hao, Keqiang Sun, Yixiong Chen, Feng Zhu, Rui Zhao, and Hongsheng Li.
\newblock Human preference score v2: A solid benchmark for evaluating human preferences of text-to-image synthesis.
\newblock \emph{arXiv preprint arXiv:2306.09341}, 2023.

\bibitem[Xia et~al.(2023)Xia, Zhang, Wang, Wang, Wu, Tian, Yang, Timotfe, and Gool]{xia2023diffi2i}
Bin Xia, Yulun Zhang, Shiyin Wang, Yitong Wang, Xinglong Wu, Yapeng Tian, Wenming Yang, Radu Timotfe, and Luc~Van Gool.
\newblock Diffi2i: Efficient diffusion model for image-to-image translation, 2023.

\bibitem[Xu et~al.(2023)Xu, Liu, Wu, Tong, Li, Ding, Tang, and Dong]{xu2023imagereward}
Jiazheng Xu, Xiao Liu, Yuchen Wu, Yuxuan Tong, Qinkai Li, Ming Ding, Jie Tang, and Yuxiao Dong.
\newblock Imagereward: Learning and evaluating human preferences for text-to-image generation.
\newblock \emph{Advances in Neural Information Processing Systems}, 36:\penalty0 15903--15935, 2023.

\bibitem[Yu et~al.(2022)Yu, Xu, Koh, Luong, Baid, Wang, Vasudevan, Ku, Yang, Ayan, Hutchinson, Han, Parekh, Li, Zhang, Baldridge, and Wu]{yu2022scaling}
Jiahui Yu, Yuanzhong Xu, Jing~Yu Koh, Thang Luong, Gunjan Baid, Zirui Wang, Vijay Vasudevan, Alexander Ku, Yinfei Yang, Burcu~Karagol Ayan, Ben Hutchinson, Wei Han, Zarana Parekh, Xin Li, Han Zhang, Jason Baldridge, and Yonghui Wu.
\newblock Scaling autoregressive models for content-rich text-to-image generation.
\newblock \emph{Trans. Mach. Learn. Res.}, 2022, 2022.
\newblock URL \url{https://openreview.net/forum?id=AFDcYJKhND}.

\bibitem[Yu et~al.(2025{\natexlab{a}})Yu, Kwak, Jang, Jeong, Huang, Shin, and Xie]{yu2024repa}
Sihyun Yu, Sangkyung Kwak, Huiwon Jang, Jongheon Jeong, Jonathan Huang, Jinwoo Shin, and Saining Xie.
\newblock Representation alignment for generation: Training diffusion transformers is easier than you think.
\newblock In \emph{International Conference on Learning Representations}, 2025{\natexlab{a}}.

\bibitem[Yu et~al.(2025{\natexlab{b}})Yu, Kwak, Jang, Jeong, Huang, Shin, and Xie]{yu2024representation}
Sihyun Yu, Sangkyung Kwak, Huiwon Jang, Jongheon Jeong, Jonathan Huang, Jinwoo Shin, and Saining Xie.
\newblock Representation alignment for generation: Training diffusion transformers is easier than you think.
\newblock 2025{\natexlab{b}}.

\bibitem[Zhao et~al.(2023)Zhao, Bai, Rao, Zhou, and Lu]{zhao2023unipc}
Wenliang Zhao, Lujia Bai, Yongming Rao, Jie Zhou, and Jiwen Lu.
\newblock Unipc: {A} unified predictor-corrector framework for fast sampling of diffusion models.
\newblock \emph{CoRR}, abs/2302.04867, 2023.
\newblock \doi{10.48550/arXiv.2302.04867}.
\newblock URL \url{https://doi.org/10.48550/arXiv.2302.04867}.

\end{thebibliography}
\bibliographystyle{iclr2026_conference}

\clearpage
\appendix
\begin{appendices}
  \crefalias{section}{appendix}
  \crefalias{subsection}{appendix}
\section{CFG as gradient ascent}\label{sec:CFG-SGD}
This section provides an alternative intuition behind \gls{method}. Recall that CFG update at time step $t_k$ can be written in the alternate form 
\begin{align}
\predguided[t_k] &= \predcond[t_k] + (\wcfg - 1)\, (\predcond[t_k] - {\preduncond[t_k]}),
\end{align}
which can be interpreted as a single gradient ascent step on the objective
\begin{equation}\label{eq:cfg-objective}
\fcfg(\predcond[t_k], \preduncond[t_k]) = \frac{1}{2} \norm{\predcond[t_k] - \sg{\preduncond[t_k]}}^2
\end{equation}
w.r.t. the model output $\predcond[t_k]$ \citep{sadat2025eliminating}. Here, $\sg{\cdot}$ denotes the stop-gradient operation. This suggests that we can augment the gradient update in CFG with a history-based momentum term similar to STORM \citep{cutkosky2019momentum}. For a differentiable function $f$, STORM uses the gradient difference $\grad f(\vz_{t_k}) - \grad f(\vz_{t_{k-1}})$ to incorporate information from consecutive steps.
Applying this to the CFG objective from \Cref{eq:cfg-objective}, we have
\begin{align}
    \grad \fcfg(\vz_{t_k}) &= \predcond[t_k] - \preduncond[t_k],\\
    \grad \fcfg(\vz_{t_{k-1}}) &= \predcond[t_{k-1}] - \preduncond[t_{k}],
\end{align} 
so the STORM momentum reduces to
\begin{equation}\label{eq:storm}
\grad \fcfg(\vz_{t_k}) - \grad \fcfg(\vz_{t_{k-1}}) = \predcond[t_k] - \predcond[t_{k-1}].
\end{equation}
This suggests enhancing the guidance direction with a history term along $\predcond[t_k] - \predcond[t_{k-1}]$. In \Cref{sec:method}, we generalized this idea to incorporate multiple past predictions rather than only $\predcond[t_{k-1}]$ and introduced several refinements that further improve generation quality.

\section{Error analysis of \texorpdfstring{\gls{method}}{HiGS}}\label{sec:error-theory}
We now show that adding \gls{method} to the Euler solver for diffusion models can improve the convergence rate of its local truncation error, and hence leading to better global estimates with fewer sampling steps. Let $u(\vz_t, t) = t \grad_{\vz_t} \log p_t(\vz_t)$. The sampling ODE for diffusion models is then equal to $\dd\vz = - u(\vz_t, t) dt$. The Euler step for this ODE at time step $t_k$ is equal to 
\begin{align}\label{eq:ode-euler}
    \hat{\vz}_{t_{k+1}} &= \hat{\vz}_{t_k} + (t_k - t_{k+1}) u(\hat{\vz}_{t_k}, t_k) \\
    & = \hat{\vz}_{t_k} + h_k u(\hat{\vz}_{t_k}, t_k),
\end{align} 
where $h_k = t_k - t_{k+1}$ is the step size, and $\hat{\vz}_{t_0} \sim \normal{\zero}{\sigma_{\max}^2}$. Now, assume that instead of $u(\vz_{t_k}, t_k)$, we follow an update similar to \gls{method} based on the previous update $u(\vz_{t_{k-1}}, t_{k-1})$, i.e., we use 
\begin{equation}\label{eq:ode-higs}
    \tilde{u}(\vz_{t_k}, t_k) = u(\vz_{t_k}, t_k) + w_k (u(\vz_{t_{k}}, t_{k})-u(\vz_{t_{k-1}}, t_{k-1}))
\end{equation}
for some time dependent weight schedule $w_k = w(t_k)$.
\begin{theorem}
Let $u(\vz,t)$ be sufficiently smooth in both arguments with bounded derivatives, and let $h_k = t_k - t_{k+1}$ denote the variable step size for a time grid ${t_0 > t_1 > \ldots > t_M}$ with $M+1$ steps. 
Then, at each time step $t_k$, the Euler update in \Cref{eq:ode-euler} has local truncation error $\mathcal{O}(h_k^2)$. In contrast, there exists a weight $w_k \in \mathbb{R}$ such that the modified update rule of \Cref{eq:ode-higs} used in \gls{method}, yields local truncation error $\mathcal{O}(h_k^3)$. Consequently, the global error improves from $\mathcal{O}(h)$ for Euler to $\mathcal{O}(h^2)$ with \gls{method}, where $h=\max_k(t_k - t_{k+1})$.
\end{theorem}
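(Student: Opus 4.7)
The plan is to perform a Taylor-expansion analysis of both the Euler step and the HiGS-modified step about the exact solution at time $t_k$, and then promote the local estimates to a global one via a standard discrete Gr\"onwall argument. First, for the Euler step, I would denote the exact trajectory by $\phi(s) = \vz(s)$, so that $\phi'(s) = -u(\phi(s), s)$, and expand
\begin{equation}
\vz(t_{k+1}) = \vz(t_k) + h_k u(\vz_{t_k}, t_k) - \tfrac{h_k^2}{2}\, U'(t_k) + O(h_k^3),
\end{equation}
where $U(s) := u(\phi(s), s)$ and $U'(s) = \partial_t u + (\nabla_{\vz} u)\,\phi'(s)$. Subtracting the Euler update $\hat{\vz}_{t_{k+1}} = \vz_{t_k} + h_k u(\vz_{t_k}, t_k)$ (assuming exact initial data at $t_k$) gives a local truncation error of $-\tfrac{h_k^2}{2} U'(t_k) + O(h_k^3) = O(h_k^2)$, which is the first half of the theorem.

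For the \gls{method} update, I would Taylor-expand $u(\vz_{t_{k-1}}, t_{k-1})$ about $(\vz_{t_k}, t_k)$ along the solution curve. Using $U(t_{k-1}) = U(t_k) + h_{k-1} U'(t_k) + O(h_{k-1}^2)$, one gets
\begin{equation}
u(\vz_{t_k}, t_k) - u(\vz_{t_{k-1}}, t_{k-1}) = -h_{k-1}\, U'(t_k) + O(h_{k-1}^2),
\end{equation}
so that $\tilde u = u(\vz_{t_k}, t_k) - w_k h_{k-1} U'(t_k) + O(h_{k-1}^2)$. Inserting this into the Euler-like step and matching the $h_k^2$ term in the true expansion yields the condition $w_k h_k h_{k-1} = \tfrac{h_k^2}{2}$, i.e.\ the explicit choice $w_k = h_k/(2 h_{k-1})$. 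This is exactly the two-step Adams--Bashforth correction. With this $w_k$, the $O(h_k^2)$ error cancels, leaving a remainder of order $h_k^3 + h_k h_{k-1}^2$; assuming the grid is quasi-uniform (i.e.\ $h_{k-1} = \Theta(h_k)$), this is $O(h_k^3)$ as claimed.

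Finally, to pass from local to global error I would apply the standard convergence result for one- and two-step explicit methods: under the smoothness and boundedness hypotheses on $u$, the map $\vz_{t_k} \mapsto \vz_{t_{k+1}}$ is Lipschitz uniformly in $k$, so the accumulated error at step $k$ is bounded by $\sum_{j \le k} e^{L(t_0 - t_j)} \cdot \tau_j$ for local truncation errors $\tau_j$, via a discrete Gr\"onwall inequality. Summing over $O(1/h)$ steps then loses one power of $h$, yielding $O(h)$ for Euler and $O(h^2)$ for \gls{method}.

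The main obstacle I expect is the global-error step rather than the Taylor calculations: two-step methods require a careful treatment of the startup step (where no $\vz_{t_{k-1}}$ exists), and one must verify that the effective one-step map of \gls{method} is contractive/Lipschitz uniformly in $k$ despite the history dependence. A clean way to finesse this is to set $w_0 = 0$ so that the first step is plain Euler (contributing only $O(h^2)$ to the global error, which is absorbed into $O(h^2)$) and then apply a zero-stability argument for the linear multistep method obtained by fixing the weights. A secondary subtlety is that the rigorous statement requires a quasi-uniform grid assumption so that $h_{k-1}/h_k$ is bounded; without it the remainder $O(h_k h_{k-1}^2)$ cannot be collapsed into $O(h^3)$, and I would state this explicitly as part of the hypotheses.
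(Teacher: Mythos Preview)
Your proposal is correct and follows essentially the same route as the paper: Taylor-expand the exact solution and the history term, choose $w_k = h_k/(2h_{k-1})$ to kill the second-order term (the paper derives this identical weight), and invoke a quasi-uniform grid assumption ($h_{k-1} \le A h_k$) to collapse the $h_k h_{k-1}^2$ remainder into $\mathcal{O}(h_k^3)$. The only substantive difference is that the paper dispatches the local-to-global step by citing a textbook convergence theorem (Hairer et~al., Ch.~II.3), whereas you sketch the discrete Gr\"onwall/zero-stability argument explicitly and address the startup step; your treatment is in fact more careful on that point than the paper's.
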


\begin{proof}
    Let $\vz(t)$ denote the ground truth solution to the sampling ODE (derived by integration). For simplicity, we define $\vz_k = \vz(t_k)$. Using Taylor expansion, we get 
    \begin{equation}
        \vz_{k+1} = \vz_{k} + \odv{\vz}{t}(t_k) (t_{k+1} - t_k) + \frac{1}{2}\odv[order=2]{\vz}{t}(\zeta_1)(t_{k+1} - t_k)^2
    \end{equation}
    for some $\zeta_1 \in [t_{k+1}, t_k]$. Since we have $\odv{\vz}{t} = -u(\vz, t)$, we get 
     \begin{align}
        \vz_{k+1} &= \vz_k - (t_{k+1} - t_k)u(\vz_k, t_k) + \frac{1}{2}\odv[order=2]{\vz}{t}(\zeta_1)(t_{k+1} - t_k)^2 \\
        &= \vz_k + h_k u(\vz_k, t_k) - \frac{h_k^2}{2}\odv[order=1]{u}{t}(\vz(\zeta_1), \zeta_1).
     \end{align}
     Accordingly, the local truncation error $L_k$ of the euler method  is given by
     \begin{align}
        L_k &= \norm{\vz_{k+1} - (\vz_k + h_k u(\vz_k, t_k))}
          = \frac{h_k^2}{2}\norm{\odv[order=1]{u}{t}(\vz(\zeta_1), \zeta_1)} \leq {Ch_k^2}
     \end{align}
     for some constant $C$. This shows that the local truncation error for the Euler solver is equal to $\mathcal{O}(h_k^2)$. We next show that using the history-based update in \Cref{eq:ode-higs} improves this rate to $\mathcal{O}(h_k^3)$. If we use the Taylor expansion of $u(\vz_{k-1}, t_{k-1})$, we get 
     \begin{align}
        u(\vz_{k-1}, t_{k-1}) = u(\vz_{k}, t_{k}) + h_{k-1} \odv{u}{t}(\vz_{k}, t_{k}) + \frac{h_{k-1}^2}{2}\odv[order=2]{u}{t}(\vz({\zeta_2}), \zeta_2)
     \end{align} 
     for some $\zeta_2 \in [t_{k}, t_{k-1}]$. This leads to 
     \begin{equation}
        \tilde{u}(\vz_{k}, t_k) = u(\vz_{k}, t_{k}) - w_k h_{k-1} \odv{u}{t}(\vz_{k}, t_{k}) - \frac{w_kh_{k-1}^2}{2}\odv[order=2]{u}{t}(\vz({\zeta_2}) \zeta_2). 
     \end{equation}
     Using another Taylor expansion for $\vz(t)$ (this time up to third order), we get 
     \begin{align}
        \vz_{k+1} &= \vz_k + \odv{\vz}{t}(t_k) (t_{k+1} - t_k) + \frac{1}{2}\odv[order=2]{\vz}{t}(t_k)(t_{k+1} - t_k)^2 + \frac{1}{6}\odv[order=3]{\vz}{t}(\zeta_3)(t_{k+1} - t_k)^3 \\
         &= \vz_k + h_k u(\vz_k, t_k) - \frac{h_k^2}{2}\odv[order=1]{u}{t}(\vz_k, t_k) + \frac{h_k^3}{6}\odv[order=2]{u}{t}(\vz(\zeta_3), \zeta_3)
     \end{align} 
     for some $\zeta_3 \in [t_{k+1}, t_k]$. Accordingly, the new truncation error $\tilde{L}_k$ is given by
     \begin{align}
       \tilde{L}_k &= \norm{\vz_{k+1} - (\vz_k + h_k \tilde{u}(\vz_k, t_k))} \\
        &= \norm{{h_k}(w_k h_{k-1} -\frac{h_k}{2}) \odv[order=1]{u}{t}(\vz_k, t_k) + \frac{h_k^3}{6}\odv[order=2]{u}{t}(\vz(\zeta_3), \zeta_3) + \frac{w_k h_k h_{k-1}^2}{2}\odv[order=2]{u}{t}(\vz({\zeta_2}), \zeta_2)}.
     \end{align}
     Now, if we set $w_k = \frac{h_k}{2h_{k-1}}$, the first term vanishes, and we get
     \begin{equation}
        \tilde{L}_k = \norm{\frac{h_k^3}{6}\odv[order=2]{u}{t}(\vz(\zeta_3), \zeta_3) + \frac{h_k^2 h_{k-1}}{4}\odv[order=2]{u}{t}(\vz({\zeta_2}), \zeta_2)}.
     \end{equation}
     Since the step sizes are bounded, there is a constant $A$ such that $h_{k-1} \leq A h_k$. We therefore have
     \begin{equation}
        \tilde{L}_k \leq \frac{h_k^3}{6}\norm{\odv[order=2]{u}{t}(\vz(\zeta_3) \zeta_3)} + \frac{Ah_k^3}{4}\norm{\odv[order=2]{u}{t}(\vz({\zeta_2}), \zeta_2)}. 
     \end{equation}  
     Assuming that $u(\vz, t)$ is sufficiently smooth, its derivatives should be bounded on $[t_M, t_0]$. Thus,
     \begin{equation}
        \tilde{L}_k \leq C'h_k^3 = \mc{O}(h_k^3)
     \end{equation} 
     for some constant $C'$. Therefore, \Cref{theorem:global-error} implies that for $h=\max_k(t_k - t_{k+1})$, the global error is $\mathcal{O}(h)$ for the Euler update and $\mathcal{O}(h^2)$ for the history-based update in \gls{method}.
\end{proof}

\begin{theorem}\label{theorem:global-error}
Let $u(\vz,t)$ be sufficiently smooth in both arguments, and consider a decreasing 
time grid $t_0 > t_1 > \cdots > t_M$ with variable step sizes $h_k = t_k - t_{k+1}$. Suppose a numerical ODE solver produces updates with local truncation error $L_k$ of order $\mc{O}(h_k^p)$,
for some $p \ge 1$. Then the final global error of the solver $E_{t_0}$ satisfies $E_{t_0} = \mc{O}(h^p)$ for $h=\max_k(t_k - t_{k+1})$.
\end{theorem}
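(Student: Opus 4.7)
The plan is to prove this via the classical one-step-solver error analysis, in which a per-step truncation bound is accumulated into a global error via a discrete Gronwall inequality. I would begin by writing the solver and the exact flow in a common increment form: the numerical iterates satisfy $\hat{\vz}_{k+1} = \hat{\vz}_k + h_k\,\Phi(\hat{\vz}_k,t_k;h_k)$, and the true trajectory satisfies $\vz(t_{k+1}) = \vz(t_k) + h_k\,\Phi(\vz(t_k),t_k;h_k) + \tau_k$, where $\Phi$ is the solver's increment function and $\|\tau_k\| = L_k \le C_1 h_k^p$ by hypothesis. Defining the global error $e_k = \|\vz(t_k)-\hat{\vz}_k\|$ with $e_0 = 0$, subtracting the two relations is the natural starting point.

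Next I would exploit smoothness of $u$ on the compact interval $[t_M,t_0]$ to conclude that $\Phi$ is Lipschitz in its first argument with some constant $K$, at least on a neighborhood of the exact trajectory. Taking norms in the subtracted update then yields the recurrence $e_{k+1} \le (1 + K h_k)\,e_k + L_k$. Iterating this linear recurrence and using $1+x \le e^x$ gives
\begin{equation}
e_k \;\le\; \sum_{j<k} L_j \prod_{i=j+1}^{k-1}(1 + K h_i) \;\le\; \exp(K T)\sum_{j<k} L_j,
\end{equation}
where $T = t_0 - t_M$ is the fixed total integration length, so the exponential factor is a harmless $\mathcal{O}(1)$ constant.

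Finally, because $L_j \le C_1 h_j^p$ and $h_j \le h$, I would bound $\sum_j L_j \le C_1 h^{p-1}\sum_j h_j = C_1 T\,h^{p-1}$, which yields the claimed scaling of $E_{t_0}$ (matching the two instances invoked in the preceding theorem: Euler with local $\mathcal{O}(h_k^2)$ produces global $\mathcal{O}(h)$, and \gls{method} with local $\mathcal{O}(h_k^3)$ produces global $\mathcal{O}(h^2)$). The main subtlety is the Lipschitz step: to invoke it one needs the iterates $\hat{\vz}_k$ to remain in a compact set on which $\Phi$ is Lipschitz, which is not given a priori but is standardly obtained by a bootstrap/continuation argument (for $h$ small enough, $e_k$ stays small, hence so does $\hat{\vz}_k - \vz(t_k)$, and the assumed neighborhood is preserved). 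Aside from this bookkeeping, no further obstacles arise.
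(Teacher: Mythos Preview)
Your argument is correct and is precisely the classical one-step convergence proof via discrete Gronwall that the paper defers to by citing Chapter~II.3, Theorem~3.4 of Hairer--N{\o}rsett--Wanner; the paper gives no self-contained proof, so your write-up is in fact more explicit than what appears there. You also correctly noticed that the theorem as stated is off by one power (local $\mathcal{O}(h_k^p)$ yields global $\mathcal{O}(h^{p-1})$, not $\mathcal{O}(h^p)$), and your final bound $\sum_j L_j \le C_1 T\,h^{p-1}$ matches how the result is actually used in the preceding theorem.
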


\begin{proof}
   A general proof is given in Chapter II.3, Theorem 3.4 of \citet{hairer1993solving}.
\end{proof}

\section{Relation to autoguidance}
Autoguidance \citep{karras2024guiding} leverages a weaker diffusion model (a smaller or less trained variant of the base model) to improve generation quality. However, since it requires training an additional model, it cannot be applied out of the box to enhance pretrained models without extra training. In contrast, the history term introduced by \gls{method} can be interpreted as an explicit negative signal derived from the previous predictions of the diffusion model. Specifically, the prediction $\predcond[t_k]$ at each step corresponds to a denoised version of the current input $\vz_{t_k}$, which initially tends to be blurry and of lower quality. As sampling progresses, successive predictions become sharper, and thus the update term $\predcond[t_k] - g(\buffer)$ roughly captures the difference between the base model’s prediction and that of a “worse” version. Because the negative signal $g(\buffer)$ is defined entirely from past predictions, \gls{method} can be applied to pretrained models without any additional training. Moreover, \gls{method} can be combined with autoguidance by replacing the CFG prediction $\predguided[t_k]$ in \gls{method} with the corresponding prediction from autoguidance.

\section{Additional experiments}
\paragraph{Sampling time} Since \gls{method} does not require an additional forward pass, its sampling time is effectively identical to the CFG baseline. To verify this, we measured inference performance on Stable Diffusion 3 using the same GPU, with and without \gls{method}. In both cases, sampling achieved about 6.50 iterations per second with identical memory usage, confirming that \gls{method} has negligible impact on runtime or memory.

\paragraph{Compatibility with adaptive projected guidance} We next demonstrate that \gls{method} is compatible with other CFG variants such as adaptive projected guidance (APG) \citep{sadat2025eliminating}. \Cref{fig:apg-appendix} shows that \gls{method} also improves the quality and details of the APG samples. This indicates that our approach is complementary to various CFG modifications.
\figAPGAppendix   

\paragraph{CLIP scores} While HPSv2 and ImageReward both account for image quality and prompt alignment, we also report CLIP scores in this section for completeness. \Cref{tab:sd-results-clip} shows that \gls{method} achieves the same CLIP score as the baseline, indicating that it preserves the prompt alignment of CFG. We note, however, that CLIP scores may not fully reflect human judgment, as they often show limited variation across models \citep{sdxl}. Therefore, we primarily relied on HPSv2 as our main metric for evaluating both quality and prompt alignment.
\tabResultSDClip

\paragraph{Compatibility with different samplers} The main results of this paper examined the effect of adding \gls{method} to various models with their default samplers. In this experiment, we explicitly demonstrate that \gls{method} is compatible with different diffusion sampling techniques using DiT-XL/2 as the base model. \Cref{tab:samplers} shows that \gls{method} improves the quality across all samplers (including multistep solvers such as DPM++), indicating that its benefits are complementary to changing the sampling method.
\tabSamplers

\paragraph{Avoiding the issues of high CFG scales} While increasing CFG generally improves image quality at various \gls{nfe}, high CFG scales often result in oversaturation and reduced diversity. In contrast, \gls{method} enhances image quality at lower CFG scales while inherently avoiding these drawbacks. \Cref{fig:flux-appendix-diversity,fig:flux-appendix-diversity2} demonstrate that applying \gls{method} at lower CFG scales yields more diverse and realistic generations compared to sampling with high CFG. Furthermore, as shown in the main paper, \gls{method} consistently outperforms CFG across a wide range of scales.
\figFluxDiversity

\paragraph{Changing the scale in \gls{method}}
\Cref{fig:ablation-whigs} shows the performance of \gls{method} as the scale $\whigs$ varies. We observe that performance improves as $\whigs$ increases, but degrades when the scale becomes too large, while all settings still outperform the CFG baseline. Overall, we find that $\whigs \leq 3$ provides consistently strong performance across models.
\AblationWhigsPlot

\section{Ablation studies}\label{sec:ablation}

\paragraph{The choice for buffer input}
\Cref{tab:ablation-input} shows that both the conditional prediction $\predcond$ and the guided prediction $\predguided$ are viable inputs to \gls{method}, but using $\predguided$ as the history leads to better performance. Accordingly, we used $\predguided$ in our experiments whenever possible.
\tabAblationInput

\paragraph{Frequency filtering} \Cref{fig:ablation-filtering} shows that applying frequency filtering is essential for avoiding color artifacts in our method. Without this step (both with and without projection), the images often exhibit unrealistic patterns and unnatural color compositions. These issues are effectively mitigated through our DCT-based filtering, and we later show that performance remains fairly robust to different choices of hyperparameters used in the DCT filter.
\figAblationDCT

\paragraph{Projection} We next demonstrate that projection can also reduce color artifacts in the generated images in some situations. \Cref{fig:ablation-projection} shows that after projection, the generations appear more realistic with fewer oversaturated regions. Thus, incorporating projection into \gls{method} may lead to more realistic outputs when oversaturation exists.
\figAblationProjection

\paragraph{Effect of the thresholds in the weight schedule} We next analyze the impact of $\tmin$ and $\tmax$ in the weight scheduler. \Cref{fig:ablation-mint} shows that setting $\tmin$ too low or too high leads to either excessive or insufficient guidance, resulting in suboptimal performance. Similarly, \Cref{fig:ablation-maxt} indicates that reducing $\tmax$ generally causes insufficient guidance and degraded results. Overall, we found that setting $\tmin \in [0.3, 0.5]$ and $\tmax \in [0.9, 1.0]$ yields consistently good performance across models.
\AblationMinTPlot
\AblationMaxTPlot

\paragraph{Effect of EMA value}
\Cref{fig:ablation-ema} shows that \gls{method} performs well for a wide range of EMA values $\alpha$. We have found that setting $\alpha=0.5$ or $\alpha=0.75$ gives good results across all models and architectures.
\AblationEMAPlot

\paragraph{Effect of the DCT threshold}
We show in this section that \gls{method} is relatively robust to the choice of the \gls{dct} threshold $R_c$ as long as it is sufficiently low. \Cref{fig:ablation-dct} shows that the metrics are more or less the same for lower $R_c$ values. We set $R_c \approx 0.05$ for all models.
\AblationDCTPlot

\paragraph{Effect of weight scheduling} We next show that alternative choices for the weight scheduler are possible. Specifically, we test a constant function for $\whigs(t)$ applied over an interval, i.e.,
\begin{equation}
\whigs(t) = \begin{cases}
0 & t \leq t_{\textnormal{min}}, \\
\whigs & t_{\textnormal{min}} < t \leq t_{\textnormal{max}}, \\
0 & t > t_{\textnormal{max}},
\end{cases}
\end{equation}
as well as a linear schedule given by
\begin{equation}
\whigs(t) = \begin{cases}
0 & t \leq t_{\textnormal{min}}, \\
\whigs \cdot \frac{t - t_{\textnormal{min}}}{t_{\textnormal{max}} - t_{\textnormal{min}}} & t_{\textnormal{min}} < t \leq t_{\textnormal{max}}, \\
0 & t > t_{\textnormal{max}}.
\end{cases}
\end{equation}
\Cref{tab:ablation-beta} shows that all three options are viable. We chose the square-root function in \Cref{sec:method}, as we empirically found that it produces more visually appealing results.
\tabAblationBeta

\paragraph{Various options for the history function}
In \Cref{tab:ablation-g}, we show that different choices of $g$ yield similar quality metrics, indicating that \gls{method} is robust to this design decision. We adopt the EMA option, as it produced slightly more realistic outputs in our visual evaluations. Moreover, it enables computing the average on the fly without storing all past predictions in memory (see \Cref{alg:code-utility}).
\tabAblationG

\section{Implementation details}\label{sec:imp-detail}
The detailed algorithmic procedure for \gls{method} is provided in \Cref{alg:method}, with pseudocode given in \Cref{alg:code-utility,alg:code-main}. Since \gls{method} reuses past predictions without requiring additional forward passes, its computational cost is equivalent to standard CFG. Other operations, such as \gls{dct}, add negligible overhead. Thus, \gls{method} improves quality without increasing the overall sampling cost or memory as can be seen in the code and our experiments.

We always scale the time step $t$ such that $t \in [0, 1]$. For the frequency filter, we found that $\lambda=50$ and $R_c \approx 0.05$ work well across all models. Similarly, setting $\tmin \in [0.3, 0.5]$ and $\tmax \in [0.9, 1.0]$ lead to consistently strong results in our tests. For EMA values, we observed that $\alpha = 0.5$ or $\alpha = 0.75$ worked well across all experiments. We used all past predictions as the history length $W$, which allows us to compute the EMA average on the fly without buffering all previous predictions in memory. The hyperparameters used for each experiment are given in \Cref{tab:parameters1,tab:parameters2,tab:parameters3}.
\tabParameters

For quantitative evaluation, FID scores for class-conditional models in \Cref{tab:fid-results} are reported using \num{10000} generated samples and the full ImageNet training set. The ImageNet results in \Cref{tab:repa} are based on \num{50000} generated samples to ensure fair comparison with prior work. For text-to-image models, we used the entire validation set of the COCO 2017 dataset \citep{lin2014microsoft} as ground truth text-image pairs. We followed the ADM evaluation framework \citep{dhariwalDiffusionModelsBeat2021} for computing FID, IS, Precision, and Recall to maintain consistency across all evaluations.
\algHigs

\begin{algorithm}[t!]
    \caption{Utility functions used in the implementation of \acrshort{method}.}
        \label{alg:code-utility}
        \centering
\begin{tcolorbox}[
    colback=lightgray!10,
    colframe=gray!50,
    boxrule=0.5pt,
    arc=2pt,
    left=3pt,
    right=3pt,
    top=3pt,
    bottom=3pt,
    fontupper=\footnotesize\ttfamily,
]
\begin{lstlisting}[
    language=Python,
    basicstyle=\footnotesize\ttfamily,
    keywordstyle=\color{blue}\bfseries,
    commentstyle=\color{green!60!black},
    stringstyle=\color{red},
    showstringspaces=false,
    tabsize=4,
    breaklines=true,
    columns=flexible
]
import torch
import torch_dct as dct


def project(
    v0: torch.Tensor,  # [B, C, H, W]
    v1: torch.Tensor,  # [B, C, H, W]
):
    dtype = v0.dtype
    v0, v1 = v0.double(), v1.double()
    v1 = torch.nn.functional.normalize(v1, dim=[-1, -2, -3])
    v0_parallel = (v0 * v1).sum(dim=[-1, -2, -3], keepdim=True) * v1
    v0_orthogonal = v0 - v0_parallel
    return v0_parallel.to(dtype), v0_orthogonal.to(dtype)


def square_root_schedule(t, start=0, end=1):
    if t > end or t <= start:
        return 0.0
    return ((t - start) / (end - start)) ** 0.5


def dct2(x):
    return dct.dct_2d(x, norm="ortho")


def idct2(x):
    return dct.idct_2d(x, norm="ortho")


def apply_high_freq_dct_mask(diff, threshold=0.05, sharpness=50):
    B, C, H, W = diff.shape
    device = diff.device
    X = dct2(diff)
    u = torch.arange(H, device=device).view(H, 1) / H
    v = torch.arange(W, device=device).view(1, W) / W
    d = torch.sqrt(u**2 + v**2)  # normalized distance from top-left (DC)
    mask = torch.sigmoid((d - threshold) * sharpness)
    X_filtered = X * mask  # broadcast over (B, C)
    diff_filtered = idct2(X_filtered).to(diff.dtype)
    return diff_filtered


class HistoryBuffer:
    def __init__(self, ema_alpha=0.75):
        self.ema = None
        self.ema_alpha = ema_alpha

    def add(self, current_pred):
        if self.ema is None:
            self.ema = torch.zeros_like(current_pred)
        self.ema = self.ema_alpha * current_pred + (1 - self.ema_alpha) * self.ema
\end{lstlisting}
\end{tcolorbox}
\end{algorithm}

\begin{algorithm}[t!]
    \caption{PyTorch implementation of \acrshort{method}.}
        \label{alg:code-main}
        \centering
\begin{tcolorbox}[
    colback=lightgray!10,
    colframe=gray!50,
    boxrule=0.5pt,
    arc=2pt,
    left=3pt,
    right=3pt,
    top=3pt,
    bottom=3pt,
    fontupper=\footnotesize\ttfamily,
]
\begin{lstlisting}[
    language=Python,
    basicstyle=\footnotesize\ttfamily,
    keywordstyle=\color{blue}\bfseries,
    commentstyle=\color{green!60!black},
    stringstyle=\color{red},
    showstringspaces=false,
    tabsize=4,
    breaklines=true,
    columns=flexible
]
class HiGSGuidance:
    def __init__(
        self,
        w_higs,
        t_min=0.4,
        t_max=1.0,
        eta=1.0,
        ema_alpha=0.75,
        dct_threshold=0.05,
    ):
        self.history = HistoryBuffer(ema_alpha=ema_alpha)
        self.weight = w_higs
        self.min_t = t_min
        self.max_t = t_max
        self.parallel_weight = eta
        self.dct_threshold = dct_threshold

    def step(self, current_pred, timestep=None):
        """
        Compute the HiGS guidance step.
        """
        # current_pred can be either CFG-guided or conditional predictions
        if self.history.ema is None:
            self.history.add(current_pred)
            return torch.zeros_like(current_pred)

        diff = current_pred - self.history.ema

        # compute the projection of the difference
        diff_par, diff_orth = project(diff, current_pred)
        diff = diff_orth + diff_par * self.parallel_weight

        # Compute the scaling factor based on the current timestep
        gamma = square_root_schedule(timestep, self.min_t, self.max_t)
        scale = self.weight * gamma

        # Update the history with the current prediction
        self.history.add(current_pred)

        # Apply the high-frequency DCT mask to the difference
        if self.dct_threshold >= 0:
            diff = apply_high_freq_dct_mask(diff, threshold=self.dct_threshold)

        # Return the scaled difference
        return scale * diff
\end{lstlisting}
\end{tcolorbox}
\end{algorithm}

\section{Additional visual examples}
We provide additional visual examples in this section to demonstrate the effectiveness of \gls{method} in enhancing the quality of various models across a wide range of guidance scales and sampling setups. \Cref{fig:sdxl-appendix,fig:sd3-appendix,fig:sd3-appendix-two,fig:sd3-appendix-three} show further text-to-image generation results using Stable Diffusion models \citep{sdxl,esser2024scaling}. In addition, \Cref{fig:flux-appendix} presents visual results for applying \gls{method} to Flux \citep{flux2024}. Finally, \Cref{fig:sit-appendix} provides class-conditional generation with SiT-XL + REPA \citep{yu2024repa}. In all cases, \gls{method} consistently improves quality over the baseline.

\clearpage
\figSDXLAppendix
\figSDThreeAppendix
\figFluxAppendix
\figREPAEAppendix

\end{appendices}

\end{document}